\def\calS{{\mathcal{S}}}
\def\calA{{\mathcal{A}}}
\def\calR{{\mathcal{R}}}
\def\Ppi{{\mP_{\hspace{-0.7mm}\pi}}}
\def\vzero{{\mathbf{0}}}
\def\vone{{\mathbf{1}}}
\def\vmu{{\mathbf{\mu}}}
\def\vd{{\mathbf{d}}}
\def\vq{{\mathbf{q}}}
\def\vr{{\mathbf{r}}}
\def\vv{{\mathbf{v}}}
\def\vw{{\mathbf{w}}}
\def\vx{{\mathbf{x}}}
\def\vQ{{\mathbf{Q}}}
\def\vmu{{\mathbf{\mu}}}
\def\mP{{\mathbf{P}}}
\newcommand{\E}{\mathbb{E}}
\newtheorem{theorem}{Theorem}
\newtheorem{lemma}{Lemma}
\title{Reward Centering}
\author{\hspace{2mm}Abhishek Naik$^{1,2}$, Yi Wan$^{3}$, Manan Tomar$^{1,2}$, Richard S.~Sutton$^{1,2}$\\
\qquad\texttt{\{abhishek.naik,mtomar,rsutton\}@ualberta.ca, yiwan@meta.com} \\
\hspace{1mm}${^1}$University of Alberta \quad ${^2}$Alberta Machine Intelligence Institute \quad ${^3}$Meta AI
\vspace{-3mm}
}
\begin{document}

\begin{center}
    \maketitle
\end{center}
\vspace{-3mm}

\begin{abstract}
    We show that discounted methods for solving continuing reinforcement learning problems can perform significantly better if they center their rewards by subtracting out the rewards' empirical average.
    The improvement is substantial at commonly used discount factors and increases further as the discount factor approaches one. 
    In addition, we show that if a \textit{problem's} rewards are shifted by a constant, then standard methods perform much worse, whereas methods with reward centering are unaffected.
    Estimating the average reward is straightforward in the on-policy setting; we propose a slightly more sophisticated method for 
    the off-policy setting.
    Reward centering is a general idea, so we expect almost every reinforcement-learning algorithm to benefit by the addition of reward centering.
\end{abstract}


Reinforcement learning is a computational approach to learning from interaction, where the goal of a learning agent is to obtain as much reward as possible (Sutton \& Barto, 2018). 
In many problems of interest, the stream of interaction between the agent and the environment is \textit{continuing} and cannot be naturally separated into disjoint subsequences or \textit{episodes}.
In continuing problems, agents experience infinitely many rewards, hence a viable way of evaluating performance is to measure the \textit{average reward} obtained per step, or the rate of reward, with equal weight given to immediate and delayed rewards. 
The \textit{discounted-reward} formulation offers another way to interpret a sum of infinite rewards by discounting delayed rewards in favor of immediate rewards. 
The two problem formulations are typically studied separately, each having a set of solution methods or algorithms. 

\begin{figure}[b]
    \centering
    \includegraphics[width=0.8\textwidth]{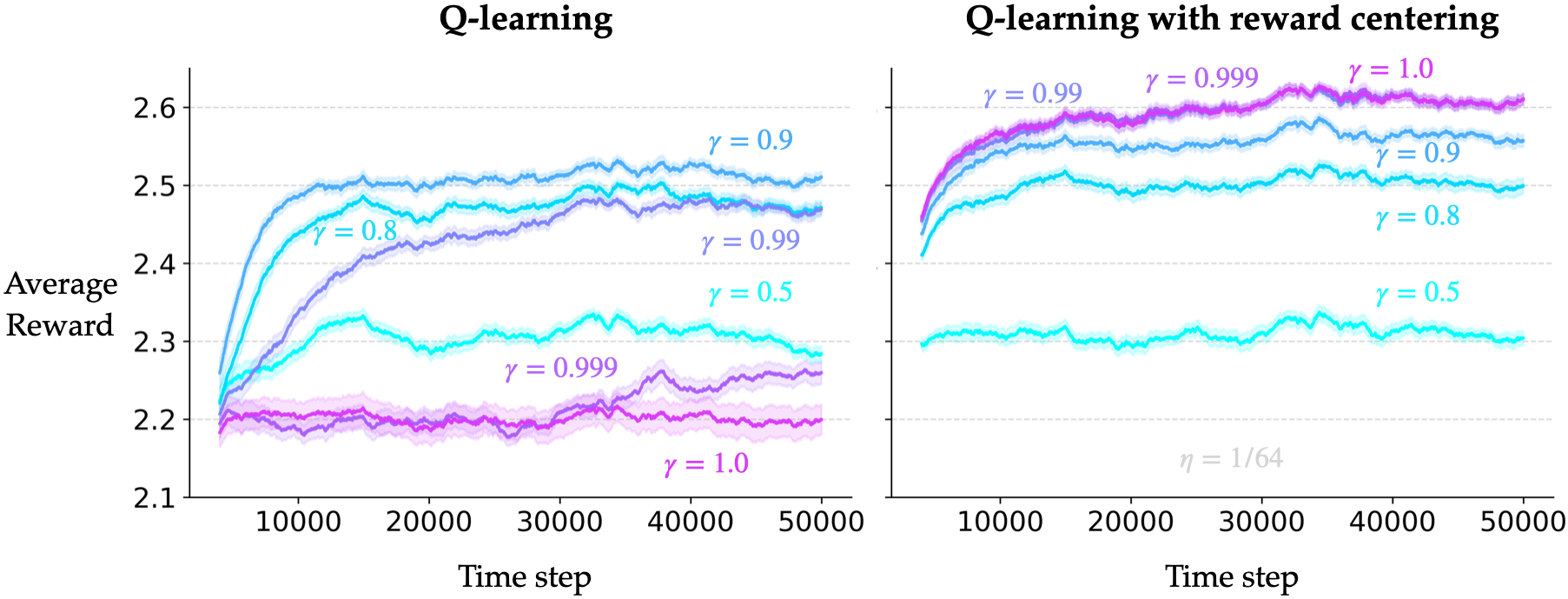}
    \caption{Learning curves showing the difference in performance of Q-learning with and without reward centering for different discount factors on the Access-Control Queuing problem (Sutton \& Barto, 1998). 
    Plotted is the average per-step reward obtained by the agent across 50 runs w.r.t.~the number of time steps of interaction. 
    The shaded region denotes one standard error. 
    See Section \ref{sec:case_study}.}
    \label{fig:effect_of_gamma}
\end{figure}

In this paper, we show that the simple idea of estimating and subtracting the average reward from the observed rewards can lead to a significant improvement in performance (as in Figure 1) when using common discounted methods such as actor--critic methods (Barto et al., 1983) or Q-learning (Watkins \& Dayan, 1992).
The underlying theory dates back to 1962 with Blackwell's seminal work on dynamic programming in discrete Markov decision processes (MDPs). 
We are still realizing some of its deeper implications, and we discuss the following two in particular:
\begin{enumerate}\itemsep-1mm
    \item Mean-centering the rewards removes a state-independent constant (that scales inversely with $1-\gamma$, where $\gamma$ denotes the discount factor) from the value estimates, enabling the value-function approximator to focus on the relative differences between the states and actions. 
    As a result, values corresponding to discount factors arbitrarily close to one can be estimated relatively easily (e.g., without any degradation in performance; 
    see Figure \ref{fig:effect_of_gamma}).
    \item Furthermore, mean-centering the rewards (unsurprisingly) makes standard methods robust to any constant offset in the rewards. 
    This can be useful in reinforcement learning applications in which the reward signal is unknown or changing.
\end{enumerate}
\vspace{-2mm}
We begin with \textit{what} reward centering is and \textit{why} it can be beneficial (Section \ref{sec:the_idea}). 
We then show \textit{how} reward centering can be done, starting with the simplest form (within the prediction problem), and show that it can be highly effective when used with discounted-reward temporal difference algorithms (Section \ref{sec:simple_centering}). 
The off-policy setting requires more sophistication; for it we propose another way of reward centering based on recent advances in the \textit{average-reward} formulation for reinforcement learning (Section \ref{sec:value_based_centering}).
Next, we present a case study of using reward centering with Q-learning, in which we (a) propose a convergence result based on recent work by Devraj and Meyn (2021) and (b) showcase consistent trends across a series of control problems that require tabular, linear, and non-linear function approximation (Section \ref{sec:case_study}).
Finally, we discuss the limitations of the proposed methods and propose directions of future work (Section \ref{sec:discussion}).

\section{Theory of Reward Centering}
\label{sec:the_idea}

We formalize the interaction between the agent and the environment by a finite MDP $(\calS, \calA, \calR, p)$, where $\calS$ denotes the set of states, $\calA$ denotes the set of actions, $\calR$ denotes the set of rewards, and $p : \calS \times \calR \times \calS \times \calA \to [0,1]$ denotes the transition dynamics. 
At time step $t$, the agent is in state $S_t \in \calS$, takes action $A_t \in \calA$ using a behavior policy $b:\calA \times \calS \to [0,1]$, observes the next state $S_{t+1} \in \calS$ and reward $R_{t+1} \in \calR$ according to the transition dynamics $p(s', r \mid s, a) = \Pr(S_{t+1}=s', R_{t+1}=r \mid S_t=s, A_t=a)$.
We consider continuing problems, where the agent-environment interaction goes on \textit{ad infinitum}.
The agent's goal is to maximize the average reward obtained over a long time (formally defined in \eqref{eq:def_avg_reward_and_differential_value_fn}).
We consider methods that try to achieve this goal by estimating the expected discounted sum of rewards from each state for $\gamma\in[0,1)$: $v_\pi^\gamma(s) \doteq \E [ \sum_{t=0}^\infty \gamma^{t} R_{t+1} \mid S_t=s, A_{t:\infty}\sim\pi ], \forall s$. 
Here, the discount factor is not part of the problem but an algorithm parameter (see Naik et al.~(2019) or Sutton \& Barto's (2018) Section 10.4 for an extended discussion on objectives for continuing problems).


Reward centering is a simple idea: subtract the empirical average of the observed rewards from the rewards.
Doing so makes the modified rewards appear \textit{mean centered}.
The effect of mean-centered rewards is well known in the bandit setting.
For instance, Sutton and Barto (2018, Section 2.8) demonstrated that estimating and subtracting the average reward from the observed rewards can significantly improve the rate of learning. 
Here we show that the benefits extend to the full reinforcement learning problem and are magnified as the discount factor $\gamma$ approaches one
.

The reason underlying the benefits of reward centering is revealed by the Laurent-series decomposition of the discounted value function. 
The discounted value function can be decomposed into two parts, one of which is a constant that does not depend on states or actions and hence is not involved in, say, action selection.
Mathematically, for the tabular discounted value function $v_\pi^\gamma: \calS \to \mathbb{R}$ of a policy $\pi$ corresponding to a discount factor $\gamma$: 
\begin{align}
\label{eq:laurent_series_expansion_state_values}
    v_\pi^\gamma(s) = \frac{r(\pi)}{1-\gamma} + \tilde{v}_\pi(s) + e_\pi^\gamma(s), \qquad \forall s,
\end{align} 
where $r(\pi)$ is the state-independent average reward obtained by policy $\pi$ and $\tilde{v}_\pi(s)$ is the \textit{differential} value of state $s$, each defined for 
ergodic MDPs (for ease of exposition) as (e.g., Wan et al., 2021):
\begin{align}
\vspace{-0.5mm}
    {\small r(\pi) \doteq \lim_{n\to\infty} \frac{1}{n} \sum_{t=1}^n \E \big[ R_{t} \mid S_0, A_{0:t-1}\sim\pi \big],} \quad 
    {\small \tilde{v}_\pi(s) \doteq \E \left[ \sum_{k=1}^\infty \big( R_{t+k} - r(\pi) \big) \mid S_t=s, A_{t:\infty}\sim\pi \right]}, \label{eq:def_avg_reward_and_differential_value_fn}
\vspace{-0.5mm}
\end{align}
and $e_\pi^\gamma(s)$ denotes an error term that goes to zero as the discount factor goes to one (Blackwell, 1962: Theorem 4a; also see Puterman's (1994) Corollary 8.2.4). 
This decomposition of the state values also implies a similar decomposition for state--action values.

The Laurent-series decomposition explains how reward centering can help learning in bandit problems such as the one in Sutton \& Barto's (2018) Figure 2.5.
There, the action-value estimates 
are initialized to zero and the true values are centered around $+4$.
The actions are selected based on their \textit{relative} values, but each action-value estimate must independently learn the same constant offset.
Approximation errors in estimating the offset can easily mask the relative differences in actions, especially if the offset is large. 

\def\A{\text{A}}
\def\B{\text{B}}
\def\C{\text{C}}
In the full reinforcement learning problem, the state-independent offset can be quite large. 
For example, 
consider the three-state Markov reward process shown Figure \ref{fig:3stateMRP_centering} (induced by some policy $\pi$ in some MDP). 
The reward is $+3$ on transition from state $\A$ to state $\B$, and $0$ otherwise.
The average reward is $r(\pi)=1$.
The discounted state values for three discount factors are shown in the table.
Note the magnitude of the standard discounted values and especially the jump when the discount factor is increased.
Now consider the discounted values with the constant offset subtracted from each state, $v_\pi^\gamma(s) - r(\pi)/(1-\gamma)$, which we call the \textit{centered discounted values}.
The centered values are much smaller in magnitude and change only slightly when the discount factor is increased.
The differential values are also shown for reference.
These trends hold in general: for any problem, the magnitude of the discounted values increase dramatically as the discount factor approaches one whereas the centered discounted values change little and approach the differential values. 

\begin{figure}[h]
\vspace{-2mm}
\centering
    \subfloat{
        \centering
        \includegraphics[width=0.3\textwidth]{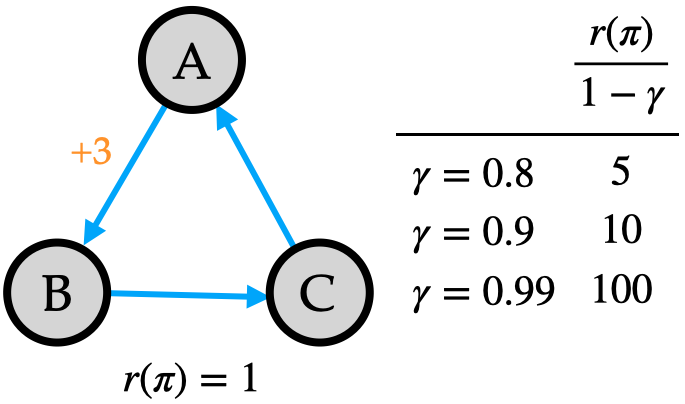}
    }%
    \quad
    \subfloat{\footnotesize
        \begin{tabular}{@{}clccc@{}}
         State &  & $\A$ & $\B$ & $\C$ \\ \midrule
        \multirow{3}{*}{\begin{tabular}[c]{@{}c@{}}Standard\\ discounted values\end{tabular}} & $\gamma=0.8$ & 6.15 & 3.93 & 4.92 \\
         & $\gamma=0.9$ & 11.07 & 8.97 & 9.96 \\
         & $\gamma=0.99$ & 101.01 & 98.99 & 99.99 \\ \midrule
        \multirow{3}{*}{\begin{tabular}[c]{@{}c@{}}Centered\\ discounted values\end{tabular}} & $\gamma=0.8$ & 1.15 & -1.07 & -0.08 \\
         & $\gamma=0.9$ & 1.07 & -1.03 & -0.04 \\
         & $\gamma=0.99$ & 1.01 & -1.01 & -0.01 \\ \midrule
        Differential values &  & 1 & -1 & 0 \\ \bottomrule
        \end{tabular}
    }
    \vspace{-1mm}
    \caption{Comparison of the standard and the centered discounted values on a simple example. 
    }
    \label{fig:3stateMRP_centering}
    \vspace{-1mm}
\end{figure}

Formally, the centered discounted values are the expected discounted sum of mean-centered rewards: 
\vspace{-5mm}
\begin{align}
\label{eq:def_centered_state_value_function}
    \tilde v_\pi^\gamma(s) \doteq 
    \E \left[ \sum_{t=0}^\infty \gamma^{t} \big( R_{t+1} - r(\pi) \big) \mid S_t=s, A_{t:\infty}\sim\pi \right], \quad 
    v_\pi^\gamma(s) = \frac{r(\pi)}{1-\gamma} + {\color{gray}\overbrace{{\color{black}\tilde{v}_\pi(s) + e_\pi^\gamma(s)}}^{ \tilde v_\pi^\gamma(s)}},\ \ \forall s,
\end{align}
where $\gamma \in [0,1]$. 
When $\gamma=1$, the centered discounted values are the same as the differential values, that is, $\tilde v_\pi^\gamma(s) = \tilde v_\pi(s), \forall s$
.
More generally, the centered discounted values are the differential values plus the error terms from the Laurent-series decomposition, as shown on the right above.

Reward centering thus enables capturing all the information within the discounted value function via two components: (1) the constant average reward and (2) the centered discounted value function.
Such a decomposition can be immensely valuable: 
(a) As $\gamma\to 1$, the discounted values tend to explode but the centered discounted values remain small and tractable. 
(b) If the \textit{problems}' rewards are shifted by a constant $c$, then the magnitude of the discounted values increases by $c/(1-\gamma)$, but the centered discounted values are unchanged because the average reward increases by $c$. 
These effects are demonstrated in the following sections.

Reward centering also enables the design of algorithms in which the discount factor (an algorithm parameter) can be changed within the lifetime of a learning agent.
This is usually inefficient or ineffective with standard discounted algorithms because their uncentered values can change massively (Figure \ref{fig:3stateMRP_centering}).
In contrast, centered values may change little, and the changes become minuscule as the discount factor approaches 1.
We discuss this exciting direction in the final section.

To obtain these potential benefits, we need to estimate the average reward from data.
In the next section we show that even the simplest method can be quite effective. 

\section{Simple Reward Centering}
\label{sec:simple_centering}

The simplest way to estimate the average reward is to maintain a running average of the rewards observed so far.
That is, if $\bar{R}_t \in \mathbb{R}$ denotes the estimate of the average reward after $t$ time steps, then $\bar{R}_t = \sum_{k=1}^t R_{k}$. 
More generally, the estimate can be updated with a step-size parameter $\beta_t$:
\begin{align}
\label{eq:update_centeredTD_rbar_simple}
    \bar{R}_{t+1} \doteq \bar{R}_t + \beta_t (R_{t+1} - \bar{R}_t). 
\end{align}
This update leads to an unbiased estimate of the average reward $\bar R_t\approx r(\pi)$, for the policy $\pi$ generating the data, if the step sizes follow standard conditions (Robbins \& Monro, 1951). 

Simple centering \eqref{eq:update_centeredTD_rbar_simple} can be used with almost any reinforcement learning algorithm.
For example, it can be combined with conventional temporal-difference (TD) learning (see Sutton, 1988a) to learn a state-value function estimate $\tilde V^\gamma:\calS \to \mathbb{R}$ by updating, 
on transition from $t$ to $t+1$: 
\begin{align}
\label{eq:update_centeredTD_values}
   \tilde{V}^\gamma_{t+1}(S_t) \doteq \tilde{V}^\gamma_t(S_t) + \alpha_t \big[ (R_{t+1} - \bar{R}_t) + \gamma \tilde{V}_t^\gamma(S_{t+1}) - \tilde{V}_t^\gamma(S_{t}) \big], 
\end{align}
with $\tilde{V}^\gamma_{t+1}(s) \doteq \tilde{V}^\gamma_{t}(s), \forall s \neq S_t$, where $\alpha_t>0$ is a step-size parameter.

We used four algorithmic variations of \eqref{eq:update_centeredTD_values} differing only in the definition of $\bar R_t$ in our first set of experiments.
One algorithm used $\bar{R}_t = 0, \forall t$, and thus involves no reward centering.
The second algorithm used the best possible estimate of the average reward: $\bar{R}_t = r(\pi), \forall t$; we call this \textit{oracle centering}. 
The third algorithm used \textit{simple reward centering} as in \eqref{eq:update_centeredTD_rbar_simple}. 
The fourth algorithm used a more sophisticated kind of reward centering which we discuss in the next section.

The environment was an MDP with seven states in a row with two actions in each state. 
The right action from the rightmost state leads to the middle state with a reward of $+7$ and the left action from the leftmost state leads to the middle state with a reward of $+1$; all other transitions have zero rewards.
The target policy takes both actions in each state with equal probability, that is, $\pi(\text{left}|\cdot) = \pi(\text{right}|\cdot) = 0.5$.
The average reward corresponding to this policy is $r(\pi)=0.25$.


Our first experiment applied the four algorithms to the seven-state MDP with two discount factors, $\gamma=0.9$ and $0.99$. 
All algorithms were run with a range of values for the step-size parameters $\alpha$.
The algorithms that learned to center were run with different values of $\eta$, where $\beta = \eta \alpha$ (without loss of generality).
Each parameter setting for each algorithm was run for 50,000 time steps, and then repeated for 50 runs.
The full experimental details are in Appendix \ref{app:more_results}.
As a measure of performance at time $t$, we used the root-mean-squared value error (RMSVE; see Sutton \& Barto, 2018, Section 9.2) between $\tilde{V}^\gamma_t$ and $\tilde{v}_\pi^\gamma$ for the centered algorithms, and between $\tilde{V}^\gamma_t$ and $v_\pi^\gamma$ for the algorithm without centering. 
There was no separate training and testing period.


Learning curves for this experiment and each value of $\gamma$ are shown in the first column of Figure \ref{fig:centering_results_prediction_offpolicy}.
For all algorithms, we show only curves for the $\alpha$ value that was best for
TD-learning without reward centering. 
For the centering methods, the curve shown is for the best choice of $\eta$ 
from a coarse search over a broad range. 
Each solid point represents the RMSVE averaged over the 50 independent runs; the shaded region shows one standard error.

First note that the learning curves start much lower when the rewards are centered by an oracle; for the other algorithms, the first error is of the order $r(\pi)/(1-\gamma)$. 
TD-learning without centering (blue) eventually reached the same error rate as the oracle-centered algorithm (orange), as expected. 
Learning the average reward and subtracting it (green) indeed helps reduce the RMSVE much faster compared to when there is no centering.
However, the eventual error rate is slightly higher, which is expected because the average-reward estimate is changing over time, leading to more variance in the updates compared to the uncentered or oracle-centered version. 
Similar trends hold for the larger discount factor (lower left), with the uncentered approach appearing much slower in comparison (note the difference in axes' scales).
In both cases, we verified that the average-reward estimate across the runs was around $0.25$.

\begin{figure}[!t]
    \vspace{-3mm}
    \centering
    \includegraphics[width=0.9\linewidth]{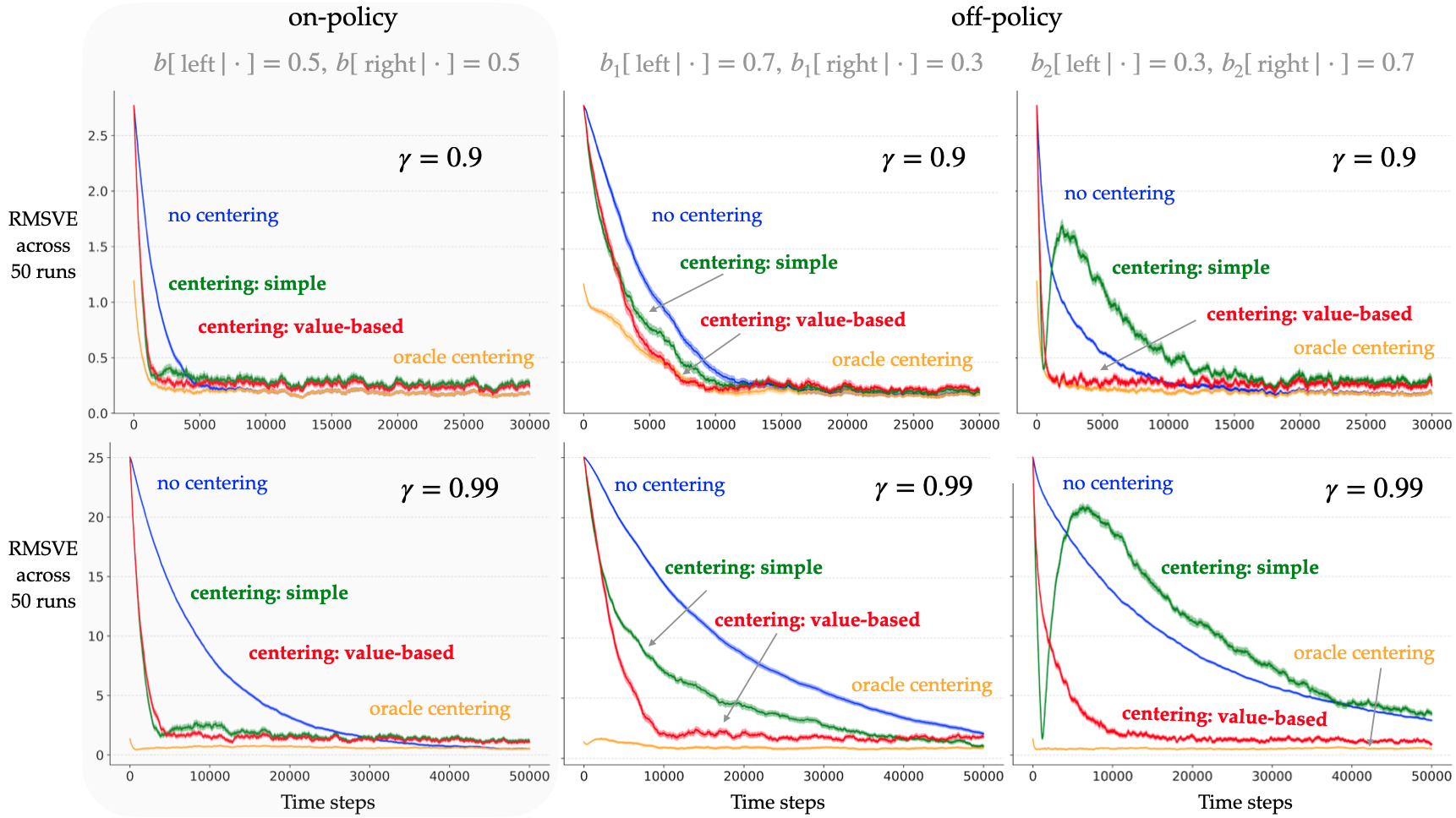}
    \vspace{-3mm}
    \caption{Learning curves demonstrating the performance of TD-learning with and without reward centering on one on-policy problem and two off-policy problems.
    }
    \label{fig:centering_results_prediction_offpolicy}
    \vspace{-3mm}
\end{figure}

These experiments show that the simple reward-centering technique can be quite effective in the on-policy setting, and the effect is more pronounced for larger discount factors.

\textbf{Limitations in the Off-policy Setting}: 
\eqref{eq:update_centeredTD_rbar_simple} leads to an unbiased estimate of the behavior policy's average reward, which means that in the \textit{off-policy} setting the average-reward estimate $\bar{R}$ will converge to $r(b)$, not $r(\pi)$.
Adding an importance-sampling ratio to the update is not enough to guarantee convergence to $r(\pi)$ because importance sampling only corrects the mismatch in action distributions, 
not the mismatch in the resulting state distributions. 



Let us consider the effect of an inaccurate estimate of the average reward. 
First, note that the centered discounted value function also satisfies a recursive Bellman equation:
\begin{align}
    \tilde{v}^\gamma(s) = \sum_a \pi(a|s) \sum_{s',r}p(s',r\mid s,a) \big[ r - \bar{r} + \gamma \tilde{v}^\gamma(s') \big], \quad
    \text{or,}\quad \tilde{\vv}^\gamma = \vr_\pi - \bar{r}\vone + \gamma \Ppi \tilde{\vv}^\gamma, \label{eq:bellman_equation_centered_evaluation_state}
\end{align}
where, $\tilde{\vv}^\gamma$ denotes a vector in $\mathbb{R}^{|\calS|}$, $\vr_\pi$ is the vector of the expected one-step reward from each state, $\bar{r}$ is a scalar variable, $\vone$ is a vector of all ones, and $\Ppi$ is the state-to-state transition matrix induced by the policy $\pi$. 
It is easy to verify that the solution tuples $(\tilde{\vv}^\gamma, \bar{r})$ of \eqref{eq:bellman_equation_centered_evaluation_state} are of the form $\big(\tilde{\vv}_\pi^\gamma + c\vone, r(\pi) - c(1-\gamma)\big), \forall c\in \mathbb{R}$, where $\tilde{\vv}_\pi^\gamma$ denotes the centered differential value function \eqref{eq:def_centered_state_value_function} corresponding to policy $\pi$ and discount factor $\gamma$. 
Equivalently, we can write the family of solutions as $\big(\tilde{\vv}_\pi^\gamma + \frac{k}{1-\gamma}\vone, r(\pi) - k\big), \forall k\in \mathbb{R}$, which shows that if the average-reward estimate is off by $k$, then the centered discounted values each have a constant offset of $k/(1-\gamma)$. 
This is undesirable. 
The primary motivation of reward centering is to eliminate the potentially large offset from the estimates. 
So we desire a way to estimate the target policy's average reward while behaving according to a different behavior policy.

However, note that an inaccurate estimation of the average reward is not a deal-breaker: standard algorithms that do not center the rewards can be perceived as using a fixed inaccurate estimate of the average reward (zero), yet they are guaranteed to converge to the true values of the target policy in the tabular case.
So the issue is less about convergence and more about the rate of learning. 
Estimating the average reward accurately may yield better sample-complexity bounds when using standard methods than simply estimating the uncentered values (e.g., the bounds for Q-learning involve powers of $1/(1-\gamma)$ (Qu \& Wierman, 2020; Wainwright, 2019; Even-Dar et al., 2003)). 
We also saw in Figure \ref{fig:centering_results_prediction_offpolicy} that when the rewards are centered by an oracle, the rate of learning is much higher compared to when there is no centering.

In summary, the effectiveness of reward centering increases with the accuracy of the average-reward estimate.
Thus, even the simple method of reward centering \eqref{eq:update_centeredTD_rbar_simple} can be highly effective when the average reward of the behavior policy is close to that of the target policy. 
This may be true when the two policies are similar, like a greedy target policy and an $\epsilon$-greedy behavior policy with a relatively small value of $\epsilon$. 
However, the benefits of reward centering in terms of rate of learning may reduce and even disappear as the difference in the two policies increases.
In the following section, we present a subtly advanced approach to estimate the average reward more accurately in the off-policy setting.


\section{Value-based Reward Centering}
\label{sec:value_based_centering}

We drew inspiration from the \textit{average-reward} formulation of reinforcement learning, where estimating the average reward in the off-policy setting is a pertinent problem.
In particular, Wan et al.~(2021) recently showed that using the \textit{temporal-difference} (TD) error (instead of the \textit{conventional} error in \eqref{eq:update_centeredTD_rbar_simple}) leads to an unbiased estimate of the reward rate in the tabular off-policy setting.
It turns out that this idea from the average-reward formulation is quite effective even in the discounted-reward formulation, which is the focus of this paper.
We show that if the behavior policy takes all the actions that the target policy does (the exact distribution over actions may differ arbitrarily), then we get a good approximation of the average reward of the target policy using the TD error:
\begin{align}
    \tilde{V}^\gamma_{t+1}(S_t) &\doteq \tilde{V}^\gamma_t(S_t) + \alpha_t\, \rho_t\, \delta_t, \label{eq:update_centeredTD_offpolicy_values} \\ 
    \bar{R}_{t+1} &\doteq \bar{R}_t + \eta\, \alpha_t\, \rho_t\, \delta_t, \label{eq:update_centeredTD_offpolicy_rbar_TDerror}
\end{align}
where, $\delta_t \doteq (R_{t+1} - \bar{R}_t) + \gamma \tilde{V}_t^\gamma(S_{t+1}) - \tilde{V}_t^\gamma(S_{t})$ is the TD error and $\rho_t \doteq \pi(A_t|S_t)/b(A_t|S_t)$ is the importance-sampling ratio.
Since this centering approach involves values in addition to the reward, we call it \textit{value-based} centering. 
Unlike with simple centering, the convergence of the average-reward estimate and the value estimates is now interdependent.
We present a convergence result in the next section for the control problem.


The first column of Figure \ref{fig:centering_results_prediction_offpolicy} shows plots for value-based centering in the on-policy problem from the previous section, where the target policy picks both actions with equal probability.
Value-based centering (red) appears as good as simple centering (green) in terms of the rate of learning and asymptotic error. 
The other two columns show plots for two off-policy experiments with behavior policies $[b_1(\text{left}|\cdot), b_1(\text{right}|\cdot)] = [0.7, 0.3]$, $[b_2(\text{left}|\cdot), b_2(\text{right}|\cdot)] = [0.3, 0.7]$.
The two different behavior policies are symmetric but resulted in different trends. 
Corresponding to $b_1$, we saw that value-based centering resulted in a lower RMSVE faster than simple centering for both values of $\gamma$, and the final error rate was roughly the same.
As expected, the simple approach estimated the average reward incorrectly and hence the learned values were relatively larger than with value-based centering (but not as large as when there was no centering). 
The results with $b_2$ were more interesting.
The RMSVE reduced rapidly at first with simple centering, then rose sharply, and then reduced again.
This is because the average-reward estimate was initialized to zero and it converged to around $0.5$ (because $b_2$ skews the agent's state distribution towards the more-rewarding right side).
When the estimate passed the true value of $0.25$, the RMSVE was quite low, however, the estimate quickly climbed to $0.5$, resulting in the peak in RMSVE. 
Eventually the value estimates settled to values corresponding to an average-reward estimate of around $0.5$.
In contrast, 
the average-reward estimate was much closer to the true value when using value-based centering, 
resulting in a smoother learning curve.
The effects were amplified with the larger discount factor (bottom row).

Overall, we observed that reward centering can improve the rate of learning of discounted-reward prediction algorithms such as TD-learning, especially for large discount factors.
While the simple way to center rewards is quite effective, value-based reward centering is better suited for general off-policy problems.
Next, we consider reward centering within the control setting.


\section{Case Study: Q-learning with Reward Centering}
\label{sec:case_study}

In this section, we examine the effects of reward centering when used alongside the Q-learning algorithm (Watkins \& Dayan, 1992).
In particular, we first present a convergence result based on recent work by Devraj and Meyn (2021).
Next, using various control problems, we empirically study the effects of reward centering on tabular, linear, and non-linear variants of Q-learning. 


\textbf{Theory}: 
The prevalence of Q-learning can be largely attributed to it being an off-policy algorithm: in the tabular case, it is guaranteed to converge to the value function of optimal policy while collecting data from an arbitrary behavior policy—even a random policy.
Given its off-policy nature, we augment Q-learning with value-based reward centering.
Since we use tabular, linear, and non-linear versions of this algorithm, we present a general form of its updates.
At each time step, given an observation, the agent converts it into a feature vector $\vx_t\in\mathbb{R}^d$, selects an action $A_t$, observes the reward signal $R_{t+1}$ and the next observation, which it converts into $\vx_{t+1}$, and so on.
In the tabular case, $\vx_t$ is a one-hot vector of the size of the state space; in the linear case, $\vx_t$ may be a tile-coding representation; in the non-linear case, $\vx_t$ is the output of the last non-linear layer of an artificial neural network. 
In each case, the agent linearly combines the feature vector with an action-specific weight vector $\vw^a\in\mathbb{R}^d, \forall a$ to obtain the action-value estimate $\hat{q}$. 
At time step $t$, with the knowledge of transition $(\vx_t, A_t, R_{t+1}, \vx_{t+1})$, Q-learning with value-based reward centering updates the average-reward estimate and the per-action weights:
\vspace{-1mm}
\begin{align}
    \vw_{t+1}^{A_t} &\doteq \vw_t^{A_t} + \alpha_t\,\delta_t\,\nabla_{\vw_t}\hat{q}(\vx_t, A_t),\label{eq:update_cdiscq_action_values} \\
    \bar{R}_{t+1} &\doteq \bar{R}_t + \eta\,\alpha_t\,\delta_t, \label{eq:update_cdiscq_rbar_TDerror} \\
    \text{where,}\quad \delta_t &\doteq R_{t+1} - \bar{R}_t + \gamma \max_a (\vw_t^a)^\top\vx_{t+1} - (\vw_t^{A_t})^\top\vx_t. \nonumber
\end{align}
The full pseudocode for all algorithms is in Appendix \ref{app:pseudocode}. 
We present the informal convergence-theorem statement here; the full theorem statement, proof, and analysis are in Appendix \ref{app:theory}.
\vspace{1mm}
\begin{theorem}
    If the Markov chain induced by the stationary behavior policy is irreducible and a per-state--action step size is reduced appropriately, tabular Q-learning with value-based reward centering (\ref{eq:update_cdiscq_action_values}–\ref{eq:update_cdiscq_rbar_TDerror}) converges almost surely: $Q_t$ and $\bar{R}_t$ converge to a particular solution $(\tilde{q}^\gamma, \bar{r})$ of the following Bellman equations: 
    \vspace{-2mm}
    \begin{align}
        \tilde{q}^\gamma(s,a) = \sum_{s',r} p(s',r \mid s,a) \big( r - \bar{r} + \gamma \max_{a'} \tilde{q}^\gamma(s',a') \big).\label{eq:bellman_equation_centered_optimality_action} 
    \end{align}
\end{theorem}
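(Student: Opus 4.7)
The plan is to cast the joint iteration $(Q_t, \bar{R}_t)$ as an asynchronous single-timescale stochastic approximation (SA) driven by the tabular behavior dynamics, identify the equilibrium set of its limiting ODE, and then import the relative-Q-learning convergence argument of Devraj and Meyn (2021). Concretely, let $\vz_t = (Q_t, \bar{R}_t) \in \mathbb{R}^{|\calS||\calA|+1}$ and write the two updates in unified form $\vz_{t+1} = \vz_t + \alpha_t\bigl(h(\vz_t) + M_{t+1}\bigr)$, where $M_{t+1}$ is a bounded-variance martingale difference under the behavior filtration and $h$ is the mean field obtained by conditioning on the stationary behavior distribution $d_b$. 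Under the irreducibility assumption every $(s,a)$ pair is visited infinitely often, so with per-state--action step sizes satisfying the usual Robbins--Monro conditions the Tsitsiklis (1994) / Borkar Chapter~7 asynchronous framework applies, and the convergence question reduces to analysing the ODE $\dot{\vz} = h(\vz)$.

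Next I would characterize the zeros of $h$. The $Q$-block of $h$ is $\mathcal{T}\tilde q - \tilde q$ for the centered Bellman optimality operator $\mathcal{T}q(s,a) = \sum_{s',r} p(s',r\mid s,a)(r - \bar R + \gamma \max_{a'} q(s',a'))$, and the $\bar R$-block is proportional to a behavior-weighted average of the same TD error. Thus zeros of $h$ coincide exactly with solutions of (\ref{eq:bellman_equation_centered_optimality_action}), and by the Laurent decomposition (\ref{eq:laurent_series_expansion_state_values}) these solutions form the affine line $\{(\tilde q^{\gamma,\star} + c\vone,\; \bar r^\star - c(1-\gamma)) : c\in\mathbb{R}\}$. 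Transverse to this line $\mathcal{T}$ is the standard $\gamma$-contraction in the max-norm, since adding a state-independent constant $-\bar R$ to the target does not affect the contraction argument; this gives local asymptotic stability of the equilibrium manifold.

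The main obstacle is showing that the iterates actually select \emph{one} specific point on this one-parameter family rather than drifting along it, and this is exactly the difficulty resolved by Devraj and Meyn (2021). The key observation is that the coupled $(Q,\bar R)$-update uses the \emph{same} TD error, so although $h$ vanishes along the degenerate direction $(\vone, -(1-\gamma)/\eta)$ at the equilibrium manifold, the behavior-weighted expectation of $\max_{a'}(\cdot)$ combined with the asymmetric weighting by $\eta$ ensures that this direction is not flow-invariant once one linearizes around a candidate equilibrium. I would adapt the Devraj--Meyn Lyapunov construction, which uses a weighted $\ell_2$ norm against $d_b$ together with a regularization in the $\bar R$ coordinate, to establish global attractiveness of a particular point on the line, pinned down by the initial conditions and the behavior distribution.

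Finally, to discharge boundedness I would invoke the Borkar--Meyn stability lemma applied to the scaled mean field $h_c(\vz) = h(c\vz)/c$: as $c\to\infty$ the $\max$ nonlinearity becomes piecewise-linear and the scaled ODE is a max-norm $\gamma$-contraction that is globally asymptotically stable at the origin, yielding $\sup_t \|\vz_t\| < \infty$ almost surely. Assembling the pieces---asynchronous SA with per-$(s,a)$ step sizes, identification of the affine equilibrium set via Laurent expansion, local contraction transverse to the manifold, the Devraj--Meyn degeneracy-breaking Lyapunov argument along the manifold, and Borkar--Meyn boundedness---gives almost-sure convergence of $(Q_t, \bar R_t)$ to a particular solution of (\ref{eq:bellman_equation_centered_optimality_action}). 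The technically hardest step is the degeneracy-breaking Lyapunov argument; this is where the debt to Devraj and Meyn (2021) is greatest, and where verifying that the centered operator satisfies their regularity hypotheses (max-norm non-expansion, piecewise linearity, coercive-at-infinity scaling) needs to be done carefully.
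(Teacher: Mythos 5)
There is a genuine gap, and it is exactly at the step you flag as hardest. The equilibrium set of the joint ODE for $(Q,\bar R)$ really is the entire affine line $\{(\tilde q^{\gamma,\star}+c\vone,\ \bar r^\star - c(1-\gamma))\}$: every point on it is a zero of the mean field, so the tangent direction to that line necessarily lies in the kernel of the Jacobian at every equilibrium. Your claim that this direction ``is not flow-invariant once one linearizes'' is therefore false, and no Lyapunov function or linearization argument can select a particular point on a connected manifold of equilibria --- the ODE simply does not move along it. Nor is a two-timescale separation available, since \eqref{eq:update_cdiscq_action_values} and \eqref{eq:update_cdiscq_rbar_TDerror} share the same step-size sequence up to the constant factor $\eta$. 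What actually pins down the limit is a fact your formulation never uses: both updates apply the \emph{identical} increment $\alpha_{\nu_t(S_t,A_t)}\delta_t$ (scaled by $\eta$ in the $\bar R$ update), so $\bar R_t - \eta\sum_{s,a}Q_t(s,a)$ is an exact, pathwise conserved quantity of the stochastic iteration itself, not merely of the ODE. Taking $\bar R_0=0$ and $Q_0=\vzero$ without loss of generality gives $\bar R_t = \eta\sum_{s,a}Q_t(s,a)$ for all $t$, which collapses the $(|\calS||\calA|+1)$-dimensional coupled system onto an $|\calS||\calA|$-dimensional one with a \emph{unique} fixed point.

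This is the paper's entire proof: after substituting $\bar R_t = \eta\sum_{s,a}Q_t(s,a)$ into the TD error, the update is literally an instance of Devraj and Meyn's Relative Q-learning with $\mu(s,a)=1/(|\calS||\calA|)$ and $\kappa=\eta|\calS||\calA|$, so their convergence theorem applies off the shelf (it already contains the asynchronous-SA and Borkar--Meyn stability machinery you propose to rebuild), and one then checks that the resulting limit $\big(\vq_*^\gamma - \tfrac{\kappa}{1-\gamma+\kappa}\vmu^\top\vq_*^\gamma\vone,\ \bar R_\infty\big)$ lies on the solution line of \eqref{eq:bellman_equation_centered_optimality_action}. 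Your surrounding scaffolding (asynchronous SA under irreducibility, contraction of the centered operator transverse to the line, scaled-ODE boundedness) is sound but redundant once the reduction is made; without the conservation-law observation, the argument as proposed cannot establish convergence to a \emph{particular} solution, which is the substance of the theorem.
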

\vspace{-3mm}
The convergence proof is a consequence of important recent work by Devraj and Meyn (2021), who showed that subtracting a quantity from the rewards in Q-learning can result in a significantly better sample-complexity bound.
Depending on the quantity subtracted, there is a whole family of Q-learning variants that converge almost surely in the tabular case to $\tilde{\vQ}_\infty^\gamma = \vq^\gamma_* - k/(1-\gamma) \vone$, where $\tilde{\vQ}_\infty^\gamma$ denotes the vector of asymptotic value estimates, $\vq^\gamma_*$ denotes the discounted action-value function of the optimal policy $\pi_\gamma^*$ corresponding to the discount factor $\gamma$, 
and $k$ depends on $\vq^\gamma_*$ and two algorithm parameters $\mu$ and $\kappa$.
Recall that the standard (uncentered) discounted value function $\vq^\gamma_*$ has a state--action-independent offset of $r(\pi_\gamma^*)/(1-\gamma)$.
Relative Q-learning can remove $k/(1-\gamma)$ of it.
This is very promising.
Devraj and Meyn left the choice of $\mu$ and $\kappa$ as open questions. We show that Q-learning with value-based centering can be seen as an instance of their algorithm family with particular choices of $\mu$ and $\kappa$. 
We further show (in Appendix \ref{app:theory}) that these choices 
can significantly reduce the 
state-independent offset. 
The equivalence enabled us to use their theoretical machinery to show almost-sure convergence and inherit strong variance-reduction properties. 

\textbf{Experiments:} 
We present results of Q-learning with and without centering on a set of control problems with tabular, linear, and non-linear function approximation (see Appendix \ref{app:pseudocode} for the pseudocode). 
The problems are primarily from CSuite (Zhao et al., 2022). 
The 
repository specifies each problem in detail; we provide high-level descriptions here.
We start the assessment in a tabular problem and then proceed to problems that require function approximation.

The Access-Control Queuing problem (Sutton \& Barto, 2018) is a continuing problem in which  the agent manages the access of incoming jobs to a set of servers. 
A job arrives at the front of the queue with one of four priorities with equal probability, and the agent has to decide at each time step whether to accept or reject the job based on the number of free servers left. 
If a job is accepted, the agent gets a positive reward proportional to the job's priority ($\{1,2,4,8\}$); if rejected, the job is removed from the queue and the agent gets zero reward. 
At each time step, occupied servers get free with a certain probability, and the agent can observe the number of servers that are currently free as well as the priority of the job at the front of the queue.

Figure \ref{fig:effect_of_gamma} shows the results of standard Q-learning (without centering) and Q-learning with value-based centering. 
For Q-learning, the curves correspond to the step-size parameters that resulted in the fastest learning over the training period (quantified by the area under the learning curve).
For Q-learning with centering, they correspond to the best step-size parameters for a fixed value of $\eta$ (shown in grey in the figure); this does not always mean the best $(\alpha, \eta)$ pair but that is okay since the results were robust to the choice of $\eta$. 
Throughout this section we followed this same practice of picking hyperparameters to plot learning curves.

The performance of Q-learning with centering did not degrade when the discount factor was close to one, unlike when there was no centering. 
For each discount factor, the performance with centering matched or exceeded that of the standard uncentered method.
To verify if centering indeed helped remove the potentially large state-independent term, we checked the magnitude of the learned values. 
\begin{wraptable}{r}{53mm}
    \vspace{-1mm}
    \caption{Magnitude of learned values}
    \label{tab:magnitude_accesscontrol}
    \vspace{-3mm}
    \begin{tabular}{ccc}
        \multicolumn{1}{c}{$\gamma$} & \textbf{\begin{tabular}[c]{@{}c@{}}Without\\centering\end{tabular}} & \textbf{\begin{tabular}[c]{@{}c@{}}With\\centering\end{tabular}} \\ \midrule
        0.5 & 4.78 & 0.17 \\
        0.8 & 12.95 & 0.17 \\
        0.9 & 26.57 & 0.12 \\
        0.99 & 267.91 & 0.42 \\
        0.999 & 1434.47 & 0.51 \\
    \end{tabular}
    \vspace{-3mm}
\end{wraptable}
One way is to compute the magnitude across all state-action pairs.
However, this approach typically leads to a poor approximation of the magnitude of learned values because many states (especially ones with low true values) may not occur frequently in the agent's $\epsilon$-greedy interactions with the environment and hence their estimated values may stay close to their initialization. 
Instead, we checked the values of states that actually occur in the agent's stream of experience, in particular the maximum action value (used to choose the argmax action) of the last $10\%$ states that occurred during training.   
Table \ref{tab:magnitude_accesscontrol} shows these values for the parameters corresponding to Figure \ref{fig:effect_of_gamma}'s learning curves. 
As $\gamma$ increased, the magnitude of learned values increased sharply with standard Q-learning but remained small with reward centering (as expected from the theory in Section \ref{sec:the_idea}). 

\begin{figure}[b]
    \vspace{-7mm}
    \centering
    \includegraphics[width=0.9\textwidth,trim={0 0 2mm 0},clip]{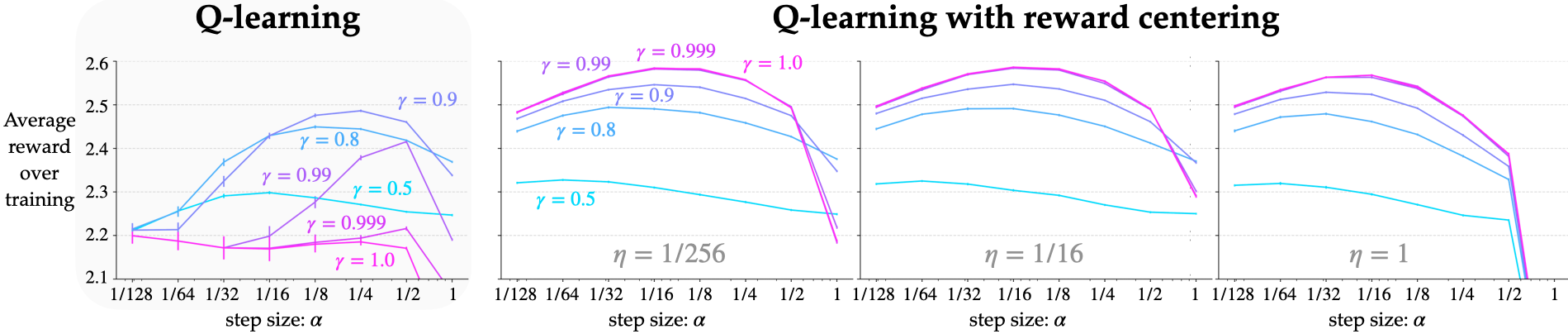}
    \caption{Parameter studies showing the sensitivity of the algorithms’ performance to their parameters on the Access-Control problem. The error bars indicate one standard error, which at times is less than the width of the lines. 
    }
    \label{fig:results_accesscontrol_sensitivity_offset0}
    \vspace{-3mm}
\end{figure}

These trends were quite general across the range of parameter values tested. 
Figure \ref{fig:results_accesscontrol_sensitivity_offset0} shows the performance sensitivity to the methods' parameters. 
In particular, the $x$-axis denotes the step-size parameter $\alpha$ and the $y$-axis denotes the average reward obtained during the entire training period (which reflects the rate of learning).
For both methods, the different curves correspond to different discount factors. 
The three plots on the right correspond to different values of the centering step-size parameter $\eta$. 
We saw the performance of Q-learning without centering deteriorated with large discount factors for a broad range of the step-size parameter $\alpha$.
In contrast, with centering, the performance did not degrade; in fact, it improved all the way till $\gamma=1$ for a wide range of $\eta$ values. 
In addition, its performance was not sensitive to the choice of $\eta$. 

We also observed the rate of learning of the standard Q-learning algorithm is significantly affected by a constant shift in the \textit{problems'} rewards.
Note that adding a constant to all the rewards does not change the ordering of the policies according to the total-reward or the average-reward criterion in continuing problems. 
Figure \ref{fig:effect_of_offset} shows the behaviors of Q-learning with and without centering when applied to five problem variants with one of $\{-8, -4, 0, 4, 8\}$ added to all the rewards. 
To compare the resulting rate of rewards across the problems, the plots are shifted post-hoc (for instance, in the problem variant where the rewards were shifted by $8$, after training, the same number was subtracted from all the rewards that the agent obtained).
The behavior of Q-learning without centering was substantially different on all the problem variants.
Q-learning with centering, unsurprisingly, results in similar behavior.
We verified that the average-reward estimate indeed learns the average reward for every variant quickly.
These trends were also consistent across values of the step-size parameters (the parameter studies are in Appendix \ref{app:more_results}). 

\begin{figure}[t]
    \vspace{-4mm}
    \centering
    \includegraphics[width=0.72\textwidth,trim={0 0 1mm 0},clip]{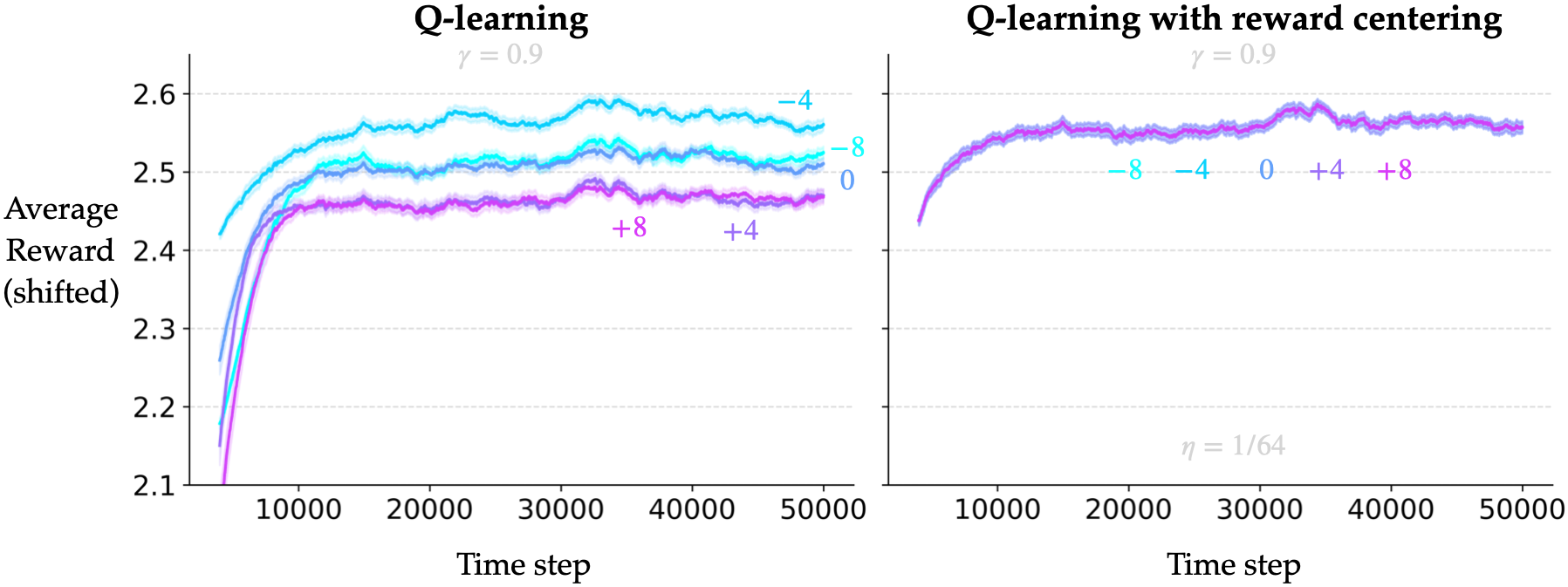}
    \vspace{-2mm}
    \caption{Learning curves on slight variants of the Access-Control Queuing problem with all the rewards shifted by a constant integer. The $y$-axis is shifted to compare learning curves for all the variants on the same scale. More details in-text.}
    \label{fig:effect_of_offset}
    \vspace{-5mm}
\end{figure}

We observed similar trends on other continuing problems with linear and one with non-linear function approximation. 
\begin{figure}[!b]
    \begin{subfigure}{.5\textwidth}
        \centering
        \includegraphics[width=0.99\textwidth]{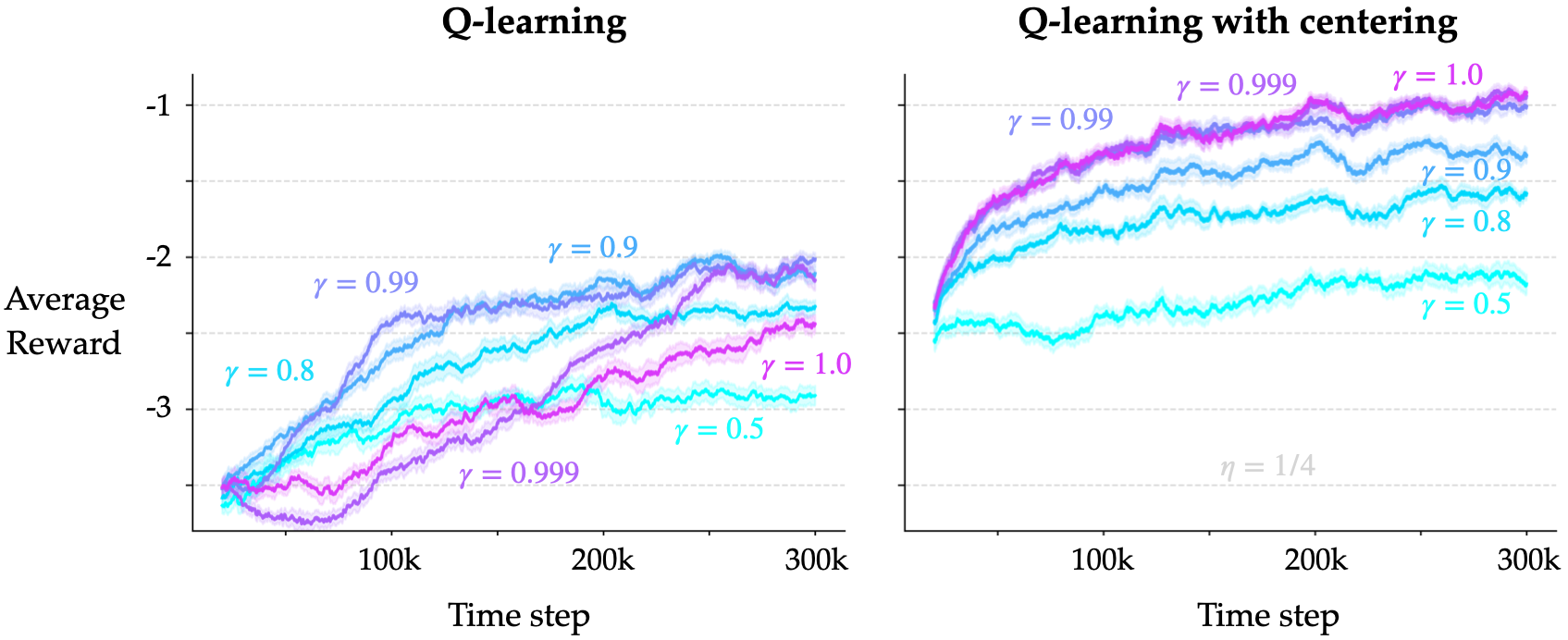}
        \vspace{-3mm}
        \caption{PuckWorld}
        \label{fig:results_puckworld_effect_of_gamma}
    \end{subfigure}%
    \begin{subfigure}{.5\textwidth}
        \centering
        \includegraphics[width=0.99\textwidth]{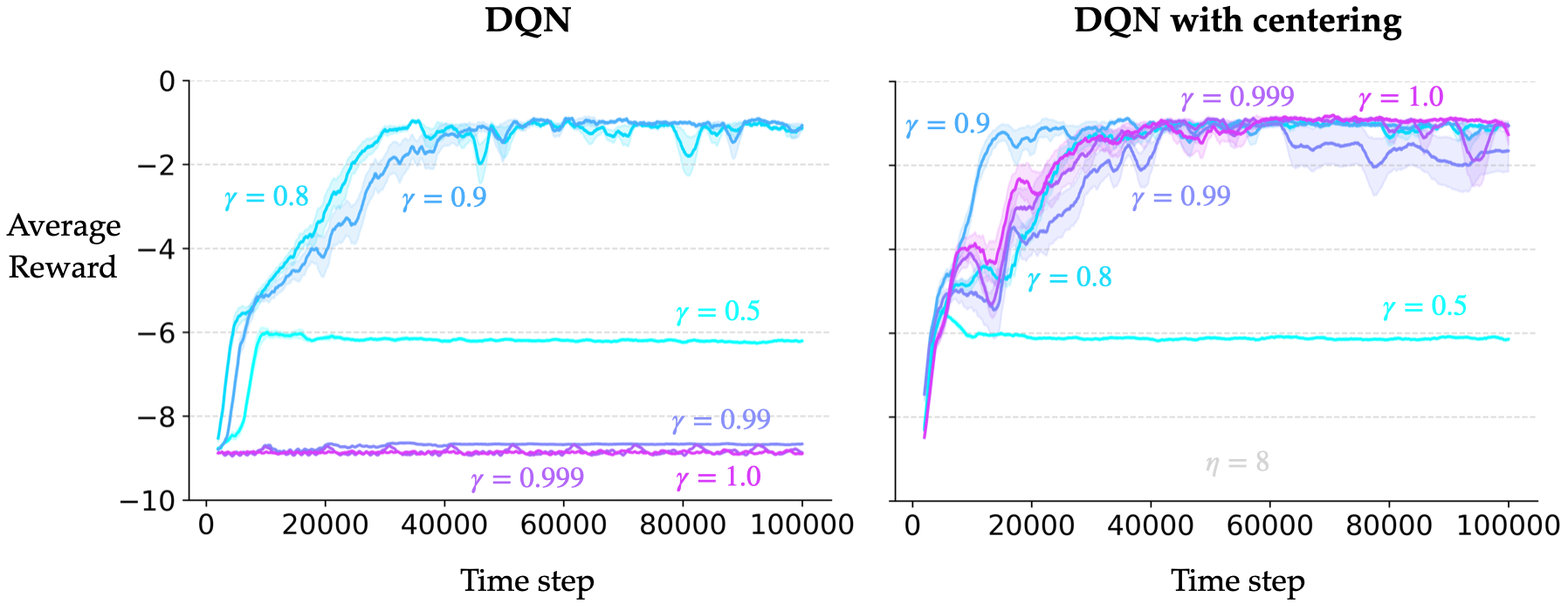}
        \vspace{-3mm}
        \caption{Pendulum}
    \end{subfigure}
    \begin{subfigure}{\textwidth}
        \centering
        \includegraphics[width=0.9\textwidth,trim={0 0 1mm 0},clip]{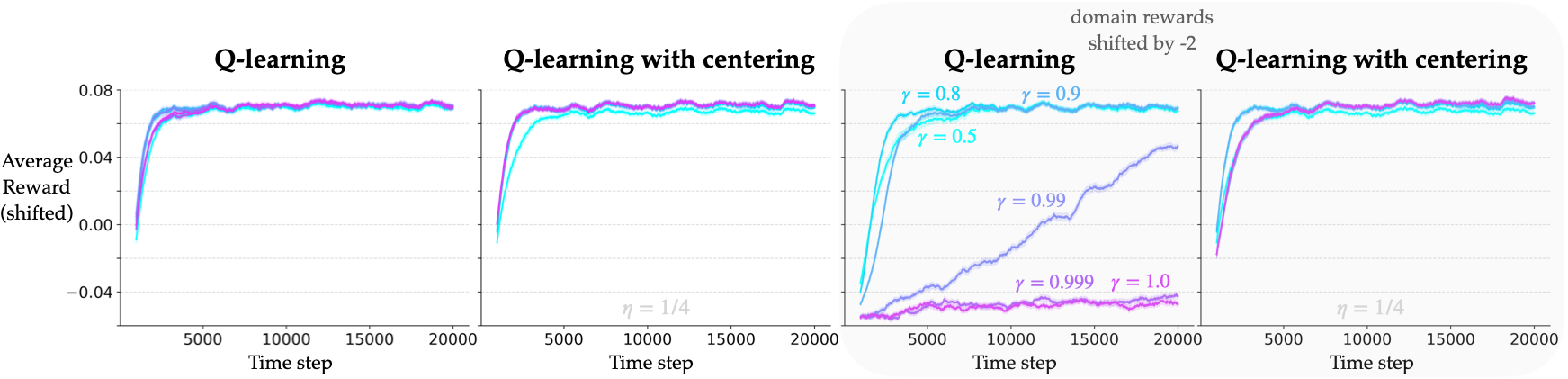}
        \vspace{-2mm}
        \caption{Catch}
    \end{subfigure}
    \caption{Learning curves with and without centering corresponding to different values of $\gamma$ on different problems. 
    In the bottom row, the two plots on the right correspond to a variant of the Catch problem where all the rewards shifted by $-2$.
    }
    \label{fig:results_PW_Pen_Catch_learning_curves}
\end{figure}
In PuckWorld, the agent has to take a puck-like object to randomly changing goal positions in a square rink. 
At each time step, the agent observes six real numbers—the puck's position and velocity and the goal position in $x$ and $y$ directions—and gets a reward proportional to the negative distance to the goal.
In Pendulum, the agent has to control the torque at the base of a one-link pendulum to take and maintain it in an upright position. 
At each time step, the agent observes three real numbers—the sine and cosine of the pendulum's angle w.r.t.~the direction of gravity, and the pendulum's angular velocity—and gets a reward proportional to the negative angular distance of the pendulum from the upright position. 
In Catch, the agent moves a crate in the bottom row of a 2D pixel grid to catch falling fruits. 
For this problem, there are two kinds of observation vectors available to an agent: a 3D real vector containing the $x$ coordinate of the crate and the $(x,y)$ coordinates of the lowermost fruit; a 50D binary vector which is the flattened version of the entire 
pixel grid.
The agent gets a $+1$ reward on successfully catching a fruit, $-1$ on dropping one, and $0$ otherwise. 
All the problems are continuing; there are no resets. 

We used linear function approximation with tile-coded features for PuckWorld and the variant of Catch in which the agent observes the 3D real-valued features.
For Pendulum and the variant of Catch with the 50D binary features, we used non-linear function approximation using artificial neural networks (Mnih et al.'s (2015) DQN). 
The complete experimental details are in Appendix \ref{app:more_results}.

The trends were similar to those observed with Access-Control Queuing. 
In PuckWorld and Pendulum (top row of Figure \ref{fig:results_PW_Pen_Catch_learning_curves}), without centering, performance first improved as the discount factor $\gamma$ increased and then degraded.
However, with centering, the performance did not degrade for large values of $\gamma$.
In Catch with linear function approximation (bottom row of Figure \ref{fig:results_PW_Pen_Catch_learning_curves}), the leftmost plot shows that the performance without centering was good even for large discount factors.
However, it varied significantly when the problem rewards were shifted up or down by a constant; the third plot from the left demonstrates this for a shift of $-2$. 
On the other hand, with centering, the performance was good for all discount factors and unaffected by any shifts in the rewards.

These trends are further supplemented by the two plots on the left of Figure \ref{fig:results_catch_sensitivity_gamma0.8}, which shows the sensitivity of the algorithms to variants of the Catch problem with rewards shifted by a constant. 
On the $x$-axis is the effective step size for the linear function approximators and on the $y$-axis is the reward rate averaged over the entire training period.
As before, the $y$-axis is adjusted to compare the performance on all the problem variants at the same scale.
We observed that the performance without reward centering was problem-dependent, whereas with centering, the rate of learning was roughly the same regardless of the problem variant.
The two plots on the right of Figure \ref{fig:results_catch_sensitivity_gamma0.8} show that the trends were similar with non-linear function approximation.

\begin{figure}[h]
    \vspace{-2mm}
    \centering
    \includegraphics[width=0.9\textwidth]{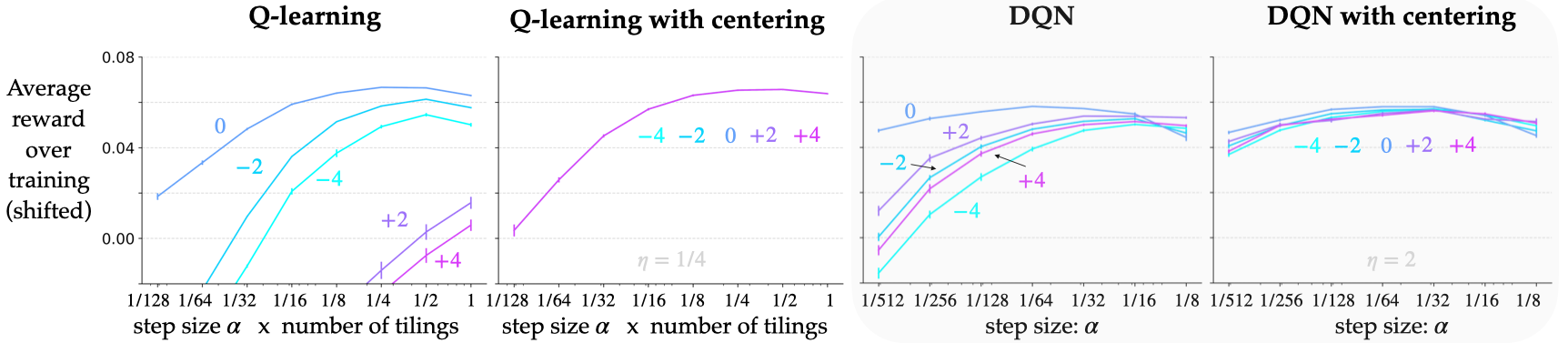}
    \caption{Parameter studies showing the sensitivity of the algorithms to their step-size parameter and to variants of the Catch problem, using both linear and non-linear function approximation. 
    }
    \label{fig:results_catch_sensitivity_gamma0.8}
    \vspace{-1mm}
\end{figure} 

Through these experiments, we observed that reward centering can improve the performance of tabular, linear, and non-linear variants of the Q-learning algorithm on various problems.
The improvement in the rate of learning is larger for discount factors close to 1.
Furthermore, there is an improvement in the robustness of the algorithms to shifts in the problems' rewards.
The parameter studies in this section indicate that the benefits of reward centering are quite robust to the choice of its parameter $\eta$.
Appendix \ref{app:more_results} contains additional learning curves and parameter studies that further reinforce the trends observed in this section.



\section{Discussion, Limitations, and Future Work}
\label{sec:discussion}

Reward centering can improve the data efficiency and robustness of almost any algorithm for continuing reinforcement learning.
Here we have shown improvements for algorithms that learn state-value functions and action-value functions, and for algorithms that are tabular or use linear or non-linear function approximation.\footnote{
As a further example, in preliminary experiments in Appendix \ref{app:more_results} we found reward centering to increase the data efficiency and robustness of Schulman et al.'s (2017) PPO algorithm on continuing versions of several of Todorov et al.'s (2012) Mujoco problems.}
We expect reward centering would also improve the performance of algorithms that learn no value function at all, such as REINFORCE (Williams, 1992) when applied to continuing problems with eligibility traces, but this has yet to be shown.
Many algorithms that were designed for the average-reward criterion already include a form of reward centering, either simple centering (e.g., Tsitsiklis \& Van Roy, 1999)
or value-based centering (Wan et al., 2021); it is experience with these earlier un-discounted algorithms that led us to explore the utility of reward centering for discounted algorithms.
Also expected, but yet to be shown, are the benefits of reward centering with other reinforcement-learning algorithms, including value-based algorithms such as Sarsa (Rummery \& Niranjan, 1994), and various offline, actor--critic, and model-based algorithms.

Reward centering is not directly applicable to episodic problems. 
In these problems the objective is to maximize the sum of rewards only up until the end of an episode; the notion of long-term average reward is undefined and the Laurent-series decomposition (with a state-independent term) no longer holds.
Moreover, if reward centering were naively applied to an episodic problem, then it may alter the problem rather than facilitate finding the solution.
This is because—unlike in continuing problems—subtracting a constant from all the rewards may change an episodic problem.
For example, consider a gridworld where the reward is $-1$ on every step until episode termination upon reaching a goal state.
An optimal policy---one that maximizes the total reward per episode---is one that reaches the goal state as soon as possible.
However, if the rewards were centered, then the modified rewards would all be zero, and all policies would be equally optimal.
The \textit{problem} would be fundamentally altered by the \textit{algorithm} that centers (or in general shifts) rewards!
The closest thing to reward centering in episodic problems may be the return baseline in policy-gradient methods (e.g., see Sutton \& Barto, 2018, Section 13.4), but that may vary from state to state, so the analogy is not really that close.\footnote{In continuing problems, reward centering has orthogonal benefits to the baseline or the advantage function in policy-gradient methods. We discuss these and further connections to the literature in Appendix \ref{app:related_work}.}

Reward centering may seem similar to, but is different from, a value-function unit with a bias weight (a weight for an input that is always 1).
First, the bias weight converges asymptotically to a value that depends on all the other inputs to the value-function unit and that is not in general $r(\pi)/(1-\gamma)$.
Second, because the learning of the bias weight interacts with learning all the other weights, we will not obtain the same data efficiency advantages as with reward centering. 
Reward centering is more akin to the specially learned bias weight in Sutton's (1988b) NADALINE linear unit.
We note that reward centering is also different from but similar in style to methods for adapting the \textit{scale} of the rewards (e.g., van Hasselt et al., 2016; Pohlen et al., 2018; Schaul et al., 2021), and the two kinds of methods can potentially be used together.

The effects of reward centering on the variance of value estimates is complex. 
On one hand, reward centering can increase variance because the average-reward estimate changes over time.
The simple centering method is particularly susceptible to this in the off-policy setting (e.g., see Figure \ref{fig:centering_results_prediction_offpolicy}).
On the other hand, value-based centering in particular can reduce the variance caused by state-dependent reward changes (cf.\ Sutton \& Barto, 2018, Exercise 10.8).
In all cases, optimization techniques could be used to efficiently adapt the step-size parameter of the average-reward estimate (Degris et al., 2024).

Perhaps the most exciting direction for extending reward centering is into new reinforcement learning algorithms that adapt their discount-rate parameter over time. 
Without reward centering, this would incur huge costs in learning time as the discounted values change by large amounts even for small changes in $\gamma\approx 1$. 
Most of these changes are due to the state-independent term $r(\pi)/(1-\gamma)$ which, with reward centering, can be adapted instantly to the new value of $\gamma$ using the existing estimate of $r(\pi)$.
Concretely, consider the agent has estimated the average reward $\bar{R}$ and the centered discounted value function $\tilde{v}^{\gamma}$ to some level of accuracy. 
With just this information, the agent can form an estimate of the standard discounted value function corresponding to another discount factor $\gamma'$ via $\bar{R}/(1-\gamma') + \tilde{v}^{\gamma}$.
This is an estimate, of course, but it can be improved quickly with a few samples of experience
.
In contrast, with standard methods, it would take comparatively longer to raise the estimates to the new mean value and adapt the relative values.
Hence, with reward centering, we can imagine efficient methods that adapt their discount factors over time: a low discount rate to learn quickly amidst a lot of uncertainty (like in the beginning of training, or when something about the world changes); a higher discount rate when the world is more predictable to estimate the policy that maximizes the total reward obtained by the agent.




\section*{Acknowledgments}
\label{sec:ack}

The authors gratefully acknowledge funding from NSERC, DeepMind, and the pan-Canadian AI program administered by CIFAR.
We thank Huizhen Yu, Arsalan Sharifnassab, Khurram Javed, and other members of the RLAI lab for discussions that helped improve the quality and clarity of the paper.
We are also grateful for the computing resources provided by the Digital Research Alliance of Canada.


\section*{References}
\parskip=5pt
\parindent=0pt
\def\hangin{\hangindent=0.2in}

\hangin
Barto, A.~G., Sutton, R.~S., \& Anderson, C.~W. (1983).
\newblock Neuronlike Elements That Can Solve Difficult Learning Control
  Problems.
\newblock {\em IEEE Transactions on Systems, Man, and Cybernetics}.




\hangin
Blackwell, D. (1962). Discrete Dynamic Programming. \textit{The Annals of Mathematical Statistics.}



\hangin
Borkar, V., \& Meyn, S. (2000). The ODE Method for Convergence of Stochastic Approximation and Reinforcement Learning. \emph{SIAM Journal on Control and Optimization}. 







\hangin
Degris, T., Javed, K., Sharifnassab, A., Liu, Y., \& Sutton, R.~S. (2024). Step-size Optimization for Continual Learning. \textit{ArXiv:2401.17401}.

\hangin
Devraj, A., \& Meyn, S. (2021). Q-learning with Uniformly Bounded Variance. \textit{IEEE Transactions on Automatic Control}. Also \emph{ArXiv:2002.10301}.

\hangin
Engstrom, L., Ilyas, A., Santurkar, S., Tsipras, D., Janoos, F., Rudolph, L., \& Madry, A. (2019). Implementation Matters in Deep RL: A Case Study on PPO and TRPO. \textit{International Conference on Learning Representations.}

\hangin
Even-Dar, E., Mansour, Y., \& Bartlett, P. (2003). Learning Rates for Q-learning. \emph{Journal of Machine Learning Research}.

\hangin
Kingma, D.~P., \& Ba, J. (2014). Adam: A Method for Stochastic Optimization. \textit{ArXiv:1412.6980}.

\hangin
Mnih, V., Kavukcuoglu, K., Silver, D., Rusu, A.~A., Veness, J., Bellemare, M.~G., Graves, A., Riedmiller, M., Fidjeland, A.~K., Ostrovski, G., Petersen, S., Beattie, C., Sadik, A., Antonoglou, I., King, H., Kumaran, D., Wierstra, D., Legg, S., \& Hassabis, D. (2015). Human-Level Control Through Deep Reinforcement Learning. \textit{Nature}.


\hangin
Naik, A., Shariff, R., Yasui, N., Yao, H., \& Sutton, R.~S. (2019). Discounted Reinforcement Learning Is Not an Optimization Problem. \emph{Optimization Foundations for Reinforcement Learning Workshop at the Conference on Neural Information Processing Systems}. Also \textit{ArXiv:1910.02140}.

\hangin
Ng, A., Harada, D., \& Russell, S. (1999). Policy Invariance Under Reward Transformations: Theory and Application to Reward Shaping. \textit{International Conference on Machine Learning}.


\hangin
Paszke, A., Gross, S., Massa, F., Lerer, A., Bradbury, J., Chanan, G., Killeen, T., Lin, Z., Gimelshein, N., Antiga, L., Desmaison, A., Köpf, A., Yang, E.~Z., DeVito, Z., Raison, M., Tejani, A., Chilamkurthy, S., Steiner, B., Fang, L., Bai, J., \& Chintala, S. (2019). PyTorch: An Imperative Style, High-Performance Deep Learning Library. \textit{Advances in Neural Information Processing Systems}.

\hangin
Pohlen, T., Piot, B., Hester, T., Azar, M.~G., Horgan, D., Budden, D., Barth-Maron, G., van Hasselt, H., Quan, J., Večerík, M., Hessel, M., Munos, R., \& Pietquin, O. (2018). Observe and Look Further: Achieving Consistent Performance on Atari. \textit{ArXiv:1805.11593}.

\hangin
Puterman, M. L. (1994). \emph{Markov Decision Processes: Discrete Stochastic Dynamic Programming.} John Wiley \& Sons.

\hangin
Qu, G., \& Wierman, A. (2020). Finite-Time Analysis of Asynchronous Stochastic Approximation and Q-learning. \emph{Conference on Learning Theory.}

\hangin
Robbins, H., \& Monro, S. (1951). A Stochastic Approximation Method. \emph{The Annals of Mathematical Statistics}.

\hangin
Rummery, G.~A., \& Niranjan, M. (1994). \textit{On-line Q-learning Using Connectionist Systems.} Technical Report, Engineering Department, Cambridge University.


\hangin
Schaul, T., Ostrovski, G., Kemaev, I.,  \& Borsa, D. (2021). Return-Based Scaling: Yet Another Normalisation Trick for Deep RL. \textit{ArXiv:2105.05347}.

\hangin 
Schneckenreither, M. (2020). Average Reward Adjusted Discounted Reinforcement Learning: Near-Blackwell-Optimal Policies for Real-World Applications. \emph{ArXiv:2004.00857}.

\hangin
Schulman, J., Moritz, P., Levine, S., Jordan, M., \& Abbeel, P. (2016). High-Dimensional Continuous Control Using Generalized Advantage Estimation. \textit{International Conference on Learning Representations}.

\hangin
Schulman, J., Wolski, F., Dhariwal, P., Radford, A., \& Klimov, O. (2017). Proximal Policy Optimization Algorithms. \textit{ArXiv:1707.06347}.

\hangin
Singh, S., Jaakkola, T., \& Jordan, M. (1994) Learning Without State-Estimation in Partially Observable Markovian Decision Processes. 
\textit{Machine Learning Proceedings}.




\hangin
Sun, H., Han, L., Yang, R., Ma, X., Guo, J., \& Zhou, B. (2022). Exploit Reward Shifting in Value-Based Deep-RL: Optimistic Curiosity-Based Exploration and Conservative Exploitation via Linear Reward Shaping. \textit{Advances in Neural Information Processing Systems}.

\hangin
Sutton, R.~S. (1988a).
\newblock Learning to Predict by the Methods of Temporal Difference.
\newblock {\em Machine Learning} 
(important erratum p.~377).

\hangin
Sutton, R.~S. (1988b). \textit{NADALINE: A Normalized Adaptive Linear Element That Learns Efficiently}. GTE Laboratories Technical Report. 

\hangin
Sutton, R.~S., \& Barto, A.~G. (1998, 2018). \emph{Reinforcement Learning: An Introduction.} First and second editions. MIT Press.




\hangin
Todorov, E., Erez, T., \& Tassa, Y. (2012). MuJoCo: A Physics Engine for Model-Based Control. \emph{International Conference on Robotics and Automation}.



\hangin
Tsitsiklis, J.~N., \& Van Roy, B. (1999). Average Cost Temporal-Difference Learning. \emph{Automatica}. 

\hangin
Tsitsiklis, J.~N., \& Van Roy, B. (2002). On Average Versus Discounted Reward Temporal-Difference Learning. \emph{Machine Learning}. 


\hangin 
Van Hasselt, H., Guez, A., Hessel, M., Mnih, V., \& Silver, D. (2016). Learning Values Across Many Orders of Magnitude. \textit{Advances in Neural Information Processing Systems}.

\hangin
Wainwright, M. J. (2019). Stochastic Approximation With Cone-Contractive Operators: Sharp $\ell_\infty $-Bounds for $ Q $-learning. \textit{ArXiv:1905.06265}.







\hangin
Wan, Y., Naik, A., \& Sutton, R.~S. (2021). Learning and Planning in Average-Reward Markov Decision Processes. \textit{International Conference on Machine Learning}.

\hangin
Watkins, C.~J., \& Dayan, P. (1992). Q-learning. \textit{Machine Learning}.




\hangin
Williams, R.~J. (1992). Simple Statistical Gradient-Following Algorithms for Connectionist Reinforcement Learning. \textit{Machine Learning}.

\hangin
Zhao, R., Abbas, Z., Szepesvári, D., Naik, A., Holland, Z., Tanner, B., \& White, A. (2022). CSuite: Continuing Environments for Reinforcement Learning, \textit{Github:} \href{https://github.com/google-deepmind/csuite}{google-deepmind/csuite}

\appendix


\section{Pseudocode}
\label{app:pseudocode}

In this section we present the pseudocode for value-based reward-centering added to the tabular, linear, and non-linear variants of Q-learning.

\vspace{-2mm}
\begin{algorithm}[h]
\SetAlgoLined
\KwIn{The behavior policy $b$ (e.g., $\epsilon$-greedy)}
\SetKwInput{AP}{Algorithm parameters}
\AP{discount factor $\gamma$, step-size parameters $\alpha, \eta$}
Initialize $Q(s,a)\ \forall s,a; \bar{R}$ arbitrarily (e.g., to zero) \\
Obtain initial $S$ \\
 \For{all time steps}
 {
    Take action $A$ according to $b$, observe $R, S'$ \\
    $\delta = R - \bar{R} + \gamma \max_{a}Q(S',a) - Q(S,A)$ \\
    $Q(S,A) = Q(S,A) + \alpha\, \delta$ \\
    $\bar{R} = \bar{R} + \eta\, \alpha\, \delta$ \\
    $S = S'$ \\
 }
 \caption{Tabular Q-learning with value-based reward centering}
 \label{algo:CDiscQ_tabular}
\end{algorithm}

\vspace{-2mm}
\begin{algorithm}[h]
\SetAlgoLined
\KwIn{The behavior policy $b$ (e.g., $\epsilon$-greedy)}
\SetKwInput{AP}{Algorithm parameters}
\AP{discount factor $\gamma$, step-size parameters $\alpha, \eta$}
Initialize $\vw_a \in \mathbb{R}^d \,\forall a, \bar{R}$ arbitrarily (e.g., to zero) \\
Obtain initial observation $\vx$ \\
 \For{all time steps}
 {
    Take action $A$ according to $b$, observe $R, \vx'$ \\
    $\delta = R - \bar{R} + \gamma \max_{a} \vw_a^\top \vx' - \vw_A \vx$ \\
    $\vw_A = \vw_A + \alpha\,\delta\,\vx$ \\
    $\bar{R} = \bar{R} + \eta\,\alpha\,\delta$ \\
    $\vx = \vx'$ \\
 }
 \caption{Linear Q-learning with value-based reward centering}
 \label{algo:CDiscQ_linear}
\end{algorithm}

\vspace{-2mm}
\begin{algorithm}[h]
\SetAlgoLined
\KwIn{The behavior policy $b$ (e.g., $\epsilon$-greedy)}
\SetKwInput{AP}{Algorithm parameters}
\AP{discount factor $\gamma$, step-size parameters $\alpha, \eta$}
Initialize value network, target network; initialize $\bar{R}$ arbitrarily (e.g., to zero) \\
Obtain initial observation $\vx$ \\
 \For{all time steps}
 {
    Take action $A$ according to $b$, observe $R, \vx'$ \\
    Store tuple $(\vx, A, R, \vx')$ in the experience buffer \\
    \If{time to update estimates}
    {
        Sample a minibatch of transitions $(\vx, A, R, \vx')^b$ \\
        For every $i$-th transition: $\delta_i = R_i - \bar{R} + \gamma \max_{a} \hat{q}(\vx_i',a) - \hat{q}(\vx_i,A_i)$ \\
        Perform a semi-gradient update of the value-network parameters with the $\delta^2$ loss \\ 
        $\bar{R} = \bar{R} + \eta\, \alpha\, \text{mean}(\delta)$ \\
        Update the target network occasionally \\
    }
    $\vx = \vx'$ \\
 }
 \caption{(Non-linear) DQN with value-based reward centering}
 \label{algo:CDiscQ_nonlinear}
\end{algorithm}

We recommend two small but useful optimizations to these general pseudocodes in Appendix \ref{app:more_results}.
The python code is available at \href{https://github.com/abhisheknaik96/continuing-rl-exps}{github.com/abhisheknaik96/continuing-rl-exps}.


\section{Theoretical Details}
\label{app:theory}

This section presents (a) the complete convergence result for Q-learning with value-based centering using Devraj and Meyn's (2021) analysis, and (b) quantifies the reduction in constant state--action-independent offset in the value estimates.

Suppose the agent's interaction with the MDP follows a stationary behavior policy $b \in \Pi$. Let $S_t, A_t$ denote the state-action pair occurring at time step $t$, followed by the reward $R_{t+1}$ and next state $S_{t+1}$. 
Let $\nu_t(s, a)$ denote the number of times a state-action pair $(s, a)$ has occurred up to and including time step $t$. 
The update rules of Q-learning with value-based centering are:
\begin{align}
    Q_{t+1}(S_t, A_t) &\doteq Q_t(S_t, A_t) + \alpha_{\nu_t(S_t, A_t)}\, \delta_t, \label{eq:cdisq_update_val} \\
    \bar{R}_{t+1} &\doteq \bar{R}_t + \eta\, \alpha_{\nu_t(S_t, A_t)}\, \delta_t, \label{eq:cdisq_update_rbar} \\
    \text{where,} \qquad \delta_t &\doteq R_{t+1} - \bar{R}_t + \gamma \max_{a'} Q_t(S_{t+1}, a') - Q_t(S_t, A_t),\label{eq:cdisq_tderror}
\end{align}
$\eta > 0$, and $\alpha_n = c / (n+d)$ where $c, d > 0$ for all $n \geq 1$.\footnote{Devraj and Meyn (2021) considered the step-size sequence $1/n$ in their algorithm but it can be easily verified that $\alpha_n = c / (n+d)$ also satisfies the step-size condition required by Borkar and Meyn's (2000) seminal result (that was used by Devraj \& Meyn (2021) to show the convergence of their algorithm).}

\vspace{3mm}
\textbf{Theorem 1.} (Formal)
    \textit{If the joint process $\{S_t, A_t\}$ induced by the stationary behavior policy is an irreducible Markov chain, that is, starting from every state-action pair, there is a non-zero probability of transitioning to any other state-action pair in a finite number of steps, then $(Q_t, \bar R_t)$ in tabular Q-learning with value-based centering (\ref{eq:cdisq_update_val}–\ref{eq:cdisq_tderror}) converges to a solution of $(\bar{\vq}^\gamma, \bar{r})$ in} \eqref{eq:bellman_equation_centered_optimality_action}.

\begin{proof}
We first show that Q-learning with value-based centering is a member of the large family of Devraj and Meyn's (2021) Relative Q-learning algorithms with particular choices of $\mu$ and $\kappa$. 
This allows us to utilize their convergence results. 

The general Relative Q-learning algorithm updates its tabular estimates $\tilde{Q}^\gamma: \calS\times\calA\to\mathbb{R}$ at time step $t$ using $(S_t, A_t, R_{t+1}, S_{t+1})$ as (in our notation):
\begin{align}
    \tilde{Q}^\gamma_{t+1}(S_t, A_t) \doteq \tilde{Q}^\gamma_t(S_t, A_t) + \alpha_t \big[ R_{t+1} - f(\tilde{Q}^\gamma_t) + \gamma \max_{a'} \tilde{Q}^\gamma_t(S_{t+1}, a') - \tilde{Q}^\gamma_t(S_t, A_t) \big], \label{eq:update_relativeq}
\end{align}
where, $f(\tilde{Q}^\gamma_t) \doteq \kappa \sum_{s,a} \mu(s,a) \tilde{Q}^\gamma_t(s,a)$, $\kappa>0$ is a scalar, and $\mu: \calS \times \calA \to [0,1]$ is a probability mass function. 

Now note that updating both the average-reward and value estimates using the TD error (\ref{eq:cdisq_update_val} and \ref{eq:cdisq_update_rbar}) results in:
\begin{align*}
    \bar{R}_t - \bar{R}_0 = \eta \Big( \sum_{s,a} Q_t(s,a) - \sum_{s,a} Q_0(s,a) \Big).    
\end{align*}
To simplify the analysis, we can assume $\bar{R}_0=0$ and $\vQ_0= \vzero$ without loss of generality.
As a result, $\bar{R}_t = \eta \sum_{s,a} \tilde{Q}^\gamma_t(s,a)$.
We can then combine the updates (\ref{eq:update_cdiscq_action_values}–\ref{eq:update_cdiscq_rbar_TDerror}) in the tabular case to:
\begin{align}
    \tilde{Q}^\gamma_{t+1}(S_t, A_t) &\doteq \tilde{Q}^\gamma_t(S_t, A_t) + \alpha_t \big( R_{t+1} - \eta \sum_{s,a} \tilde{Q}^\gamma_t(s,a) + \max_{a'} \tilde{Q}^\gamma_t(S_{t+1}, a') - \tilde{Q}^\gamma_t(S_t, A_t) \big). \label{eq:update_cdiscq_combined}
\end{align}
Comparing \eqref{eq:update_relativeq} and \eqref{eq:update_cdiscq_combined}, we can see that Q-learning with value-based reward centering is an instance of Relative Q-learning with:
\begin{align*}
    \mu(s,a) = \frac{1}{|\calS||\calA|} \ \forall s,a, \quad \text{and} \quad \kappa = \eta {|\calS||\calA|}.
\end{align*}
Devraj and Meyn's (2021) convergence result then applies. 
That is, 
\begin{align}
    \tilde{\vQ}^\gamma_t \to \tilde{\vQ}^\gamma_\infty &\doteq \vq_*^\gamma - \frac{\kappa}{1 - \gamma + \kappa} \vmu^\top \vq_*^\gamma \mathbf{1} \nonumber\\
    & = \vq_{*}^\gamma - \frac{\eta}{1 - \gamma + \eta |\calS||\calA|} \sum_{s,a} q_{*}^\gamma(s,a) \vone.  \label{eq:cdiscq_q_infty}
\end{align}
Hence,
\begin{align}
    \bar{R}_t \to \bar R_\infty &\doteq \eta \sum_{s,a} q_{*}^\gamma(s,a) - \frac{\eta^2 |\calS||\calA|}{1 - \gamma + \eta |\calS||\calA|} \sum_{s,a} q_{*}^\gamma(s,a) \nonumber \\
    &= \frac{\eta(1-\gamma)}{1 - \gamma + \eta |\calS||\calA|} \sum_{s,a} q_{*}^\gamma(s,a). \label{eq:cdisq_barR_infty}
\end{align}

We will now verify that $(\tilde{\vQ}^\gamma_\infty, \bar{R}_\infty)$ satisfy the Bellman equations \eqref{eq:bellman_equation_centered_optimality_action}. 
Recall that the solutions of the Bellman equation are of the form $\big( \tilde{\vq}^\gamma_* + \frac{k}{1-\gamma}\vone, r(\pi^*_\gamma) - k \big)$.
Since $\tilde{\vq}^\gamma_* = \vq^\gamma_* - \frac{r(\pi^*_\gamma)}{1-\gamma}$, we can re-write the solution class in terms of the discounted value function: $\big( \vq^\gamma_* + \frac{(k - r(\pi^*_\gamma))}{1-\gamma}\vone, r(\pi^*_\gamma) - k \big)$, or $\big( \vq^\gamma_* - \frac{d}{1-\gamma}\vone, d \big)$. 
For $d=\frac{\eta(1-\gamma)}{1 - \gamma + \eta |\calS||\calA|} \sum_{s,a} q_{*}^\gamma(s,a)$, we can see that $(\tilde{\vQ}^\gamma_\infty, \bar{R}_\infty)$ is a solution tuple of the Bellman equations.
\end{proof}

We can now characterize how close $\bar{R}_\infty$ is to $r(\pi^*_\gamma)$.
In general the expression for $\bar{R}_\infty$ \eqref{eq:cdisq_barR_infty} is cryptic.
However, a special case can shed some light.
We know that the average of the discounted value function for a policy w.r.t.~that policy's steady-state distribution is: $\sum_{s,a} d_\pi(s,a) q_\pi^\gamma(s,a) = \frac{r(\pi)}{1-\gamma}$.
Now suppose the steady-state distribution over state--action pairs is constant—$1/(|\calS||\calA|), \forall s, a$.
For that policy, $\frac{1}{|\calS||\calA|} \sum_{s,a}\, q_\pi^\gamma(s,a) = \frac{r(\pi)}{1-\gamma}$.
Substituting this in \eqref{eq:cdisq_barR_infty}, we get:
\begin{align}
    \bar{R}_\infty = \frac{\eta |\calS||\calA|}{1 - \gamma + \eta |\calS||\calA|} r(\pi_\gamma^*). \label{eq:barR_infty_close_to_rpi}
\end{align}
We can see that $\bar{R}_\infty$ approaches the true reward rate from below when $\eta |\calS||\calA| >> 1-\gamma$, which can be true in many problems of interest that have large state (and action) spaces.
That being said, note that this insight comes from a special case.
More generally, the convergence point of $\bar{R}_\infty$ (and hence $\tilde{\vQ}^\gamma_\infty$) is hard to interpret, which is a shortcoming we wish to resolve in future work. 
However, \eqref{eq:barR_infty_close_to_rpi} can serve as a rule of thumb. 

We end this section with a property of the centered discounted values.
\vspace{1mm}
\begin{lemma}
    The centered discounted values $\tilde{\vv}_\pi^\gamma$ are on average zero when weighted by the on-policy distribution $\vd_\pi$ induced by the policy $\pi$:
    \begin{align}
        \vd_\pi^\top \tilde{\vv}_\pi^\gamma = 0.
    \end{align}
\end{lemma}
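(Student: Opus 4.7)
The plan is to use the recursive Bellman equation for the centered discounted value function given in (\ref{eq:bellman_equation_centered_evaluation_state}), together with the three defining properties of the on-policy stationary distribution $\vd_\pi$: (i) invariance under $\Ppi$, namely $\vd_\pi^\top \Ppi = \vd_\pi^\top$; (ii) normalization $\vd_\pi^\top \vone = 1$; and (iii) the fact that the expected one-step reward under $\vd_\pi$ is the average reward, $\vd_\pi^\top \vr_\pi = r(\pi)$. This last property follows directly from the definition of $r(\pi)$ in (\ref{eq:def_avg_reward_and_differential_value_fn}) together with the ergodic theorem: the limiting average reward equals the expectation of the one-step reward under the stationary state distribution.

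Concretely, I would start from the vector-form centered Bellman equation
\begin{align*}
    \tilde{\vv}_\pi^\gamma = \vr_\pi - r(\pi)\vone + \gamma \Ppi \tilde{\vv}_\pi^\gamma,
\end{align*}
left-multiply both sides by $\vd_\pi^\top$, and apply the three properties above to obtain
\begin{align*}
    \vd_\pi^\top \tilde{\vv}_\pi^\gamma = r(\pi) - r(\pi) + \gamma\, \vd_\pi^\top \tilde{\vv}_\pi^\gamma = \gamma\, \vd_\pi^\top \tilde{\vv}_\pi^\gamma.
\end{align*}
Rearranging gives $(1-\gamma)\, \vd_\pi^\top \tilde{\vv}_\pi^\gamma = 0$, so for $\gamma \in [0,1)$ the claim $\vd_\pi^\top \tilde{\vv}_\pi^\gamma = 0$ follows immediately.

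The only case that requires separate treatment is $\gamma = 1$, where the factor $(1-\gamma)$ vanishes. For that case I would fall back on the probabilistic definition in (\ref{eq:def_centered_state_value_function}): writing
\begin{align*}
    \vd_\pi^\top \tilde{\vv}_\pi^\gamma = \sum_{t=0}^\infty \gamma^t\, \E_{S_0\sim \vd_\pi}\!\big[R_{t+1} - r(\pi)\big],
\end{align*}
and using that stationarity of $\vd_\pi$ under $\Ppi$ implies $\E_{S_0\sim\vd_\pi}[R_{t+1}] = r(\pi)$ for every $t\ge 0$, so every term in the sum is identically zero. Interchanging expectation and summation is justified by absolute convergence for $\gamma<1$ (and by the standard averaged-cost arguments of Wan et al.~(2021) for $\gamma=1$).

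The main thing to be careful about is not an obstacle so much as a bookkeeping point: confirming that $\vd_\pi^\top \vr_\pi$ indeed equals $r(\pi)$, since $\vr_\pi(s) = \sum_{a,s',r} \pi(a|s)p(s',r|s,a)\,r$ is the expected one-step reward from state $s$ under $\pi$, and by ergodicity this expectation under $\vd_\pi$ matches the long-run average in (\ref{eq:def_avg_reward_and_differential_value_fn}). With that identified, the derivation is a two-line consequence of the centered Bellman equation, which is why the Bellman-equation route is preferable to a direct manipulation of the infinite sum.
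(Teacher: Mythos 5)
Your proof is correct, but it takes a different route from the paper's. The paper's proof is a two-liner: it invokes the known identity $\vd_\pi^\top \vv_\pi^\gamma = r(\pi)/(1-\gamma)$ (citing Sutton \& Barto's Section 10.4 and Singh et al.'s Section 5.3), notes $\tilde{\vv}_\pi^\gamma = \vv_\pi^\gamma - \frac{r(\pi)}{1-\gamma}\vone$ from \eqref{eq:def_centered_state_value_function}, and concludes by normalization of $\vd_\pi$. You instead re-derive everything from scratch by left-multiplying the centered Bellman equation \eqref{eq:bellman_equation_centered_evaluation_state} by $\vd_\pi^\top$ and using stationarity, which yields $(1-\gamma)\,\vd_\pi^\top\tilde{\vv}_\pi^\gamma = 0$. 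The two arguments are close cousins --- the cited identity is itself usually proved by exactly your left-multiplication trick applied to the uncentered Bellman equation --- but yours is self-contained rather than relying on an external fact, and it has one genuine advantage: the paper's argument breaks down at $\gamma = 1$ (where $r(\pi)/(1-\gamma)$ is undefined) even though the definition in \eqref{eq:def_centered_state_value_function} allows $\gamma \in [0,1]$, whereas your separate term-by-term treatment of the $\gamma=1$ case, using $\E_{S_0\sim\vd_\pi}[R_{t+1}] = r(\pi)$ for every $t$, covers it cleanly. The trade-off is length: the paper's citation-based proof is shorter, while yours does slightly more work to buy independence from the literature and the extra case.
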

\begin{proof}
    The proof is trivial after using the property that $\vd_\pi^\top \vv_\pi^\gamma = r(\pi)/(1-\gamma)$ (see Sutton \& Barto's (2018) Section 10.4 or Singh et al.'s (1994) Section 5.3). Since $\tilde{\vv}_\pi^\gamma = \vv_\pi^\gamma - r(\pi)/(1-\gamma)\vone$ (from \eqref{eq:def_centered_state_value_function}), $\vd_\pi^\top \tilde{\vv}_\pi^\gamma = 0$.
\end{proof}
\newpage
\section{Experimental Details}
\label{app:more_results}

\textbf{Prediction}
`TD-learning with rewards centered by an oracle' refers to a version of TD-learning with centering in which the average-reward estimate is fixed to the (somehow) known average reward of the target policy.
In other words, the true average reward is known from the beginning and is subtracted from the observed rewards at each time step.
This algorithm is a good baseline because its rate of learning is likely the theoretical best among all TD-based prediction algorithms (in stationary problems where the average reward of the fixed target policy does not change with time).

Each algorithm was run on the random-walk problem for 50,000 steps and repeated 50 times each.
The step size $\alpha$ was decayed by $0.99999$ at each step. 
The values estimates for all variants and the average-reward estimate for TD-with-centering were initialized to zero.

We tested $\alpha\in\{0.01, 0.02, 0.04, 0.08, 0.16, 0.32\}$ and picked the one which resulted in the lowest average RMSVE across the training period for standard uncentered approach ($\alpha=0.04$ for $\gamma=0.9$ and $\alpha=0.08$ for $\gamma=0.99$). 
Corresponding to these step sizes, we tested the centering approaches' parameter $\eta$ within a coarse range of $\{1/640, 1/160, 1/40, 1/10\}$ and picked one based on the aforementioned criteria.
As mentioned earlier, this does not result in the best choice of $\alpha, \eta$ for the centering approaches, which is okay; we made sure the baselines are tuned appropriately.


\textbf{Control}
Table \ref{tab:list_of_hyperparameters} contains a list of all the hyperparameters tested that are common across all the domains: $\gamma, \alpha, \eta$. 
Note that setting $\eta=0$ and initializing the average-reward estimate to zero, Q-learning with reward centering behaves exactly like standard Q-learning. 
For each set of parameters, the algorithms were run for $N$ steps and repeated $R$ times.
The $(N, R)$ tuples for each problem were: Access-Control Queuing: $(80k, 50)$; PuckWorld: $(300k, 20)$, Pendulum: $(100k, 15)$; Catch (linear): $(20k, 50)$; Catch (non-linear): $(80k, 15)$. 
For generating variants of the problems, we shifted the rewards by a range of numbers roughly proportional to the scale of rewards in the original problem: Access-Control Queuing and PuckWorld: \{-8, -4, 0, 4, 8\}; Pendulum: \{-12, -6, 0, 6, 12\}; Catch: \{-4, -2, 0, 2, 4\}.

The agent's behavior policy was always $\epsilon$-greedy with fixed $\epsilon=0.1$.
For all the experiments, the average-reward estimate was initialized to zero. 
The value-estimation weights were initialized to zero in the tabular and linear experiments; the weights were initialized to small values around zero in the non-linear experiments (the default initialization in PyTorch (Paszke et al., 2019)).
For the linear experiments we used 16 tiles of size $4\times4\times4$ for Catch and 32 tiles of size $4\times4\times4\times4\times4\times4$ for PuckWorld. 
These numbers and sizes were not specifically optimized for any problem or algorithm. 

\begin{table}[b]
\centering
\caption{List of hyperparameters tested for each domain}
\label{tab:list_of_hyperparameters}
\begin{tabular}{@{}lccc@{}}
\toprule
\textbf{} & $\gamma$ & $\alpha$ & $\eta$ \\ \midrule
\multicolumn{1}{c}{\begin{tabular}[c]{@{}c@{}}\textbf{Access-Control Queuing}\\(tabular)\\[2mm]\end{tabular}} & \begin{tabular}[c]{@{}c@{}}{[}0.5, 0.8, 0.9,\\ 0.99, 0.999, 1{]}\\[2mm]\end{tabular} & \begin{tabular}[c]{@{}c@{}}{[}1/128, 1/64, 1/32, \\ 1/16, 1/8, 1/4, 1/2, 1{]}\\[2mm]\end{tabular} & \begin{tabular}[c]{@{}c@{}}{[}0, 1/256, 1/64, \\ 1/16, 1/4, 1{]}\\[2mm]\end{tabular} \\
\multicolumn{1}{c}{\begin{tabular}[c]{@{}c@{}}\textbf{PuckWorld}\\ (linear)\\[2mm]\end{tabular}} & \begin{tabular}[c]{@{}c@{}}{[}0.5, 0.8, 0.9,\\ 0.99, 0.999, 1{]}\end{tabular} & \begin{tabular}[c]{@{}c@{}}{[}0.01, 0.1, 0.3, 0.5, \\ 0.7, 0.9, 1.0, 1.1{]}\end{tabular} & \begin{tabular}[c]{@{}c@{}}{[}0, 1/256, 1/64, \\ 1/16, 1/4, 1{]}\end{tabular} \\
\multicolumn{1}{c}{\begin{tabular}[c]{@{}c@{}}\textbf{Catch}\\ (linear)\\[2mm]\end{tabular}} & \begin{tabular}[c]{@{}c@{}}{[}0.5, 0.8, 0.9,\\ 0.99, 0.999, 1{]}\end{tabular} & \begin{tabular}[c]{@{}c@{}}{[}1/128, 1/64, 1/32, \\ 1/16, 1/8, 1/4, 1/2, 1{]}\end{tabular} & \begin{tabular}[c]{@{}c@{}}{[}0, 1/256, 1/64, \\ 1/16, 1/4, 1{]}\end{tabular} \\
\multicolumn{1}{c}{\begin{tabular}[c]{@{}c@{}}\textbf{Catch}\\(non-linear)\\[2mm]\end{tabular}} & \begin{tabular}[c]{@{}c@{}}{[}0.5, 0.8, 0.9,\\  0.99, 0.999, 1{]}\end{tabular} & \begin{tabular}[c]{@{}c@{}}{[}1/512, 1/256, 1/128, \\ 1/64, 1/32, 1/16, 1/8{]}\end{tabular} & \begin{tabular}[c]{@{}c@{}}{[}0, 1, 2, \\ 4, 8, 16{]}\end{tabular} \\
\multicolumn{1}{c}{\begin{tabular}[c]{@{}c@{}}\textbf{Pendulum}\\(non-linear)\\[2mm]\end{tabular}} & \begin{tabular}[c]{@{}c@{}}{[}0.5, 0.8, 0.9, \\ 0.99, 0.999, 1{]}\end{tabular} & \begin{tabular}[c]{@{}c@{}}{[}1/512, 1/256, 1/128,\\ 1/64, 1/32, 1/16, 1/8{]}\end{tabular} & \begin{tabular}[c]{@{}c@{}}{[}0, 1, 2, \\ 4, 8, 16{]}\end{tabular} \\ \bottomrule
\end{tabular}
\end{table}

We set commonly used values for the various parameters of the deep RL (non-linear) experiments: the batch size was 64, the value-network and reward-rate parameters were updated every 32 steps, the target network was updated every 128 steps, the experience buffer size was 10,000. Apart for the main step-size parameter, the default parameters (set by PyTorch) were used for the Adam optimizer (Kingma \& Ba, 2014).

Centering in the non-linear setting (that is, with DQN) in its current form requires a large value of $\eta$ compared to the the tabular or linear versions.
The reason is how a minibatch is used in the implementation of this deep RL algorithm.
In line 10 of Algorithm \ref{algo:CDiscQ_nonlinear}, the mean of the TD errors of the minibatch of transitions is taken. 
The mean can make the overall gradient for the reward-rate update very small, so a large value of $\eta$ can be used.

In our implementations we added two simple optimizations:
\begin{enumerate}\itemsep0mm
    \item Make the average-reward estimate completely independent of its initialization: this can be done using the unbiased constant step-size trick (see Sutton \& Barto's (2018) Exercise 2.7). 
    \item Propagate the changes to the average-reward estimate faster: this can be done by first computing the TD error, then updating the reward-rate estimate, then recomputing the TD error with the new reward-rate estimate, and finally updating the value estimate(s).
\end{enumerate}
These optimizations did not affect the overall trends in the results but provided a small yet noticeable improvement for a tiny computational cost, hence we recommend using them. 

For the experiments involving a shift in the problem rewards, the rewards obtained on each problem variant are not directly comparable. 
For intuition, imagine the first four rewards in the original problem be 2,0,3,1. 
In a variant of the problem with 5 added to all the rewards, the first four rewards may now appear to be 7,5,8,4.
An agent solving the latter problem might trivially appear better than one solving the former problem even though its fourth reward was relatively lower. 
To compare them meaningfully, from the rewards obtained by an agent, we can subtract the constant that was added in the first place to all the problem's rewards.
That is, we can shift the rewards back to make fair comparisons across problem variants.
This is what we did when presenting the results of the shifting experiments; this is explicitly denoted by the word ``shifted'' in the $y$-axis label.

Figures \ref{fig:results_accesscontrol_sensitivity_gamma0.9}–\ref{fig:results_pen_sensitivity_gamma0.8} supplement the main trends shown in the main text: the effectiveness of centering increases as the discount factor approaches 1; with reward centering, the algorithms are more robust to any constant shifts in the rewards; the performance of reward centering is quite robust to the choice of the parameter $\eta$.

\begin{figure}[h]
    \centering
    \includegraphics[width=0.95\textwidth]{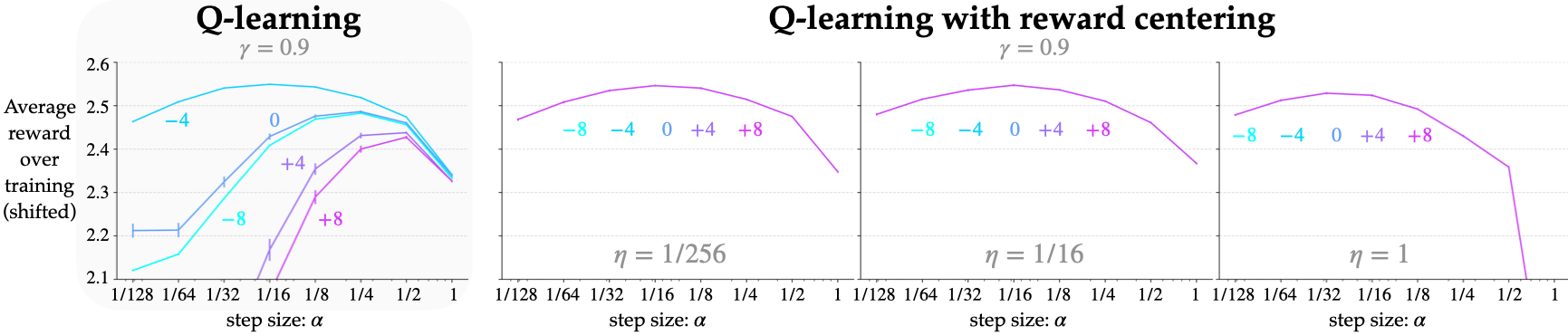}
    \caption{Parameter studies showing the sensitivity of the two algorithms’ performance on variants of the Access-Control Queuing domain.
    The error bars indicate one standard error, which at times is less than the width of the lines. 
    \textit{Far left:} Without centering, the performance of Q-learning differed significantly on the different variants over a broad range of the step-size parameter $\alpha$. 
    \textit{Center to right:} With centering, the performance was about the same across the problem variants, and was quite robust to the choice of its parameter $\eta$. 
    All the curves correspond to $\gamma=0.9$; the trends were consistent across other discount factors.
    }    \label{fig:results_accesscontrol_sensitivity_gamma0.9}
\end{figure}

\begin{figure}
    \centering
    \includegraphics[width=0.8\textwidth]{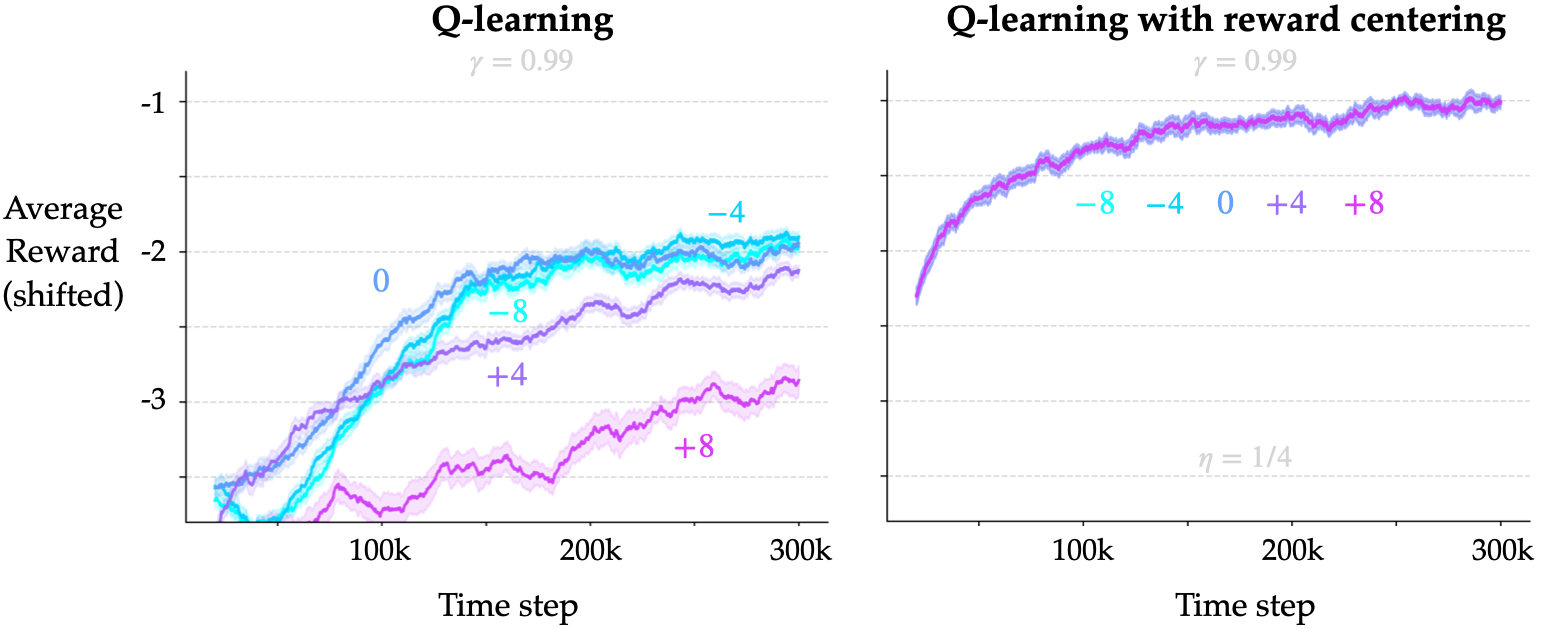}
    \caption{Learning curves for Q-learning with and without centering on variants of the PuckWorld problem when $\gamma=0.99$. 
    The performance without centering was different on each variant while that with centering was roughly the same. Reward centering also resulted in much faster learning. 
    These trends were consistent across values of $\gamma$.}
    \label{fig:results_PW_learningcurves_gamma0.99}
\end{figure}

\begin{figure}
    \centering
    \includegraphics[width=0.95\textwidth]{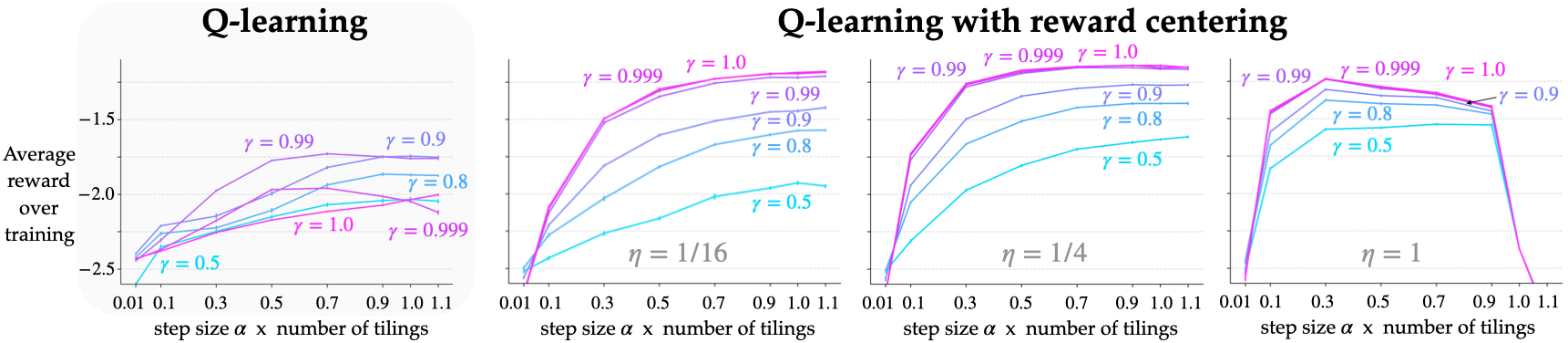}
    \caption{Parameter studies showing the sensitivity of the algorithms’ performance to their parameters on the PuckWorld domain. \textit{Far left:} Without centering, Q-learning's performance was relatively poor for a large range of $\alpha$. \textit{Center to right:} For each discount factor, the performance of Q-learning with centering was better across a broad range of $\alpha$. 
    }
    \label{fig:results_PW_sensitivity_offset0}
\end{figure}

\begin{figure}
    \centering
    \includegraphics[width=0.95\textwidth]{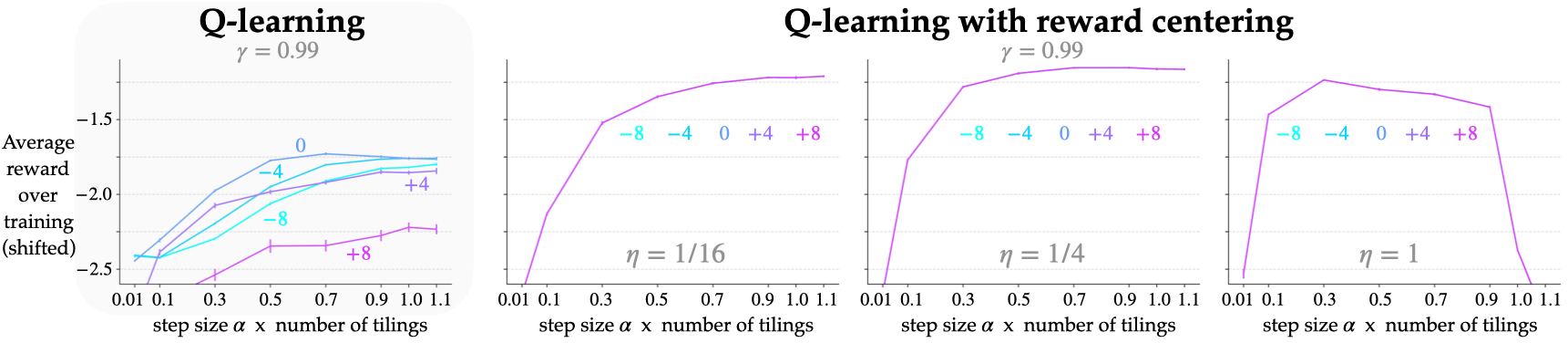}
    \caption{Parameter studies showing the sensitivity of the algorithms’ performance to variants of the PuckWorld domain. 
    The error bars indicate one standard error, which at times is less than the width of the lines. 
    \textit{Far left:} Without centering, the performance of Q-learning differed significantly on the different variants over a broad range of the step-size parameter $\alpha$. 
    \textit{Center to right:} With centering, the performance was about the same across the problem variants, and was quite robust to the choice of its parameter $\eta$. 
    All the curves correspond to $\gamma=0.99$; the trends were consistent across other discount factors.}
    \label{fig:results_PW_sensitivity_gamma0.99}
\end{figure}

\begin{figure}
    \centering
    \includegraphics[width=0.8\textwidth]{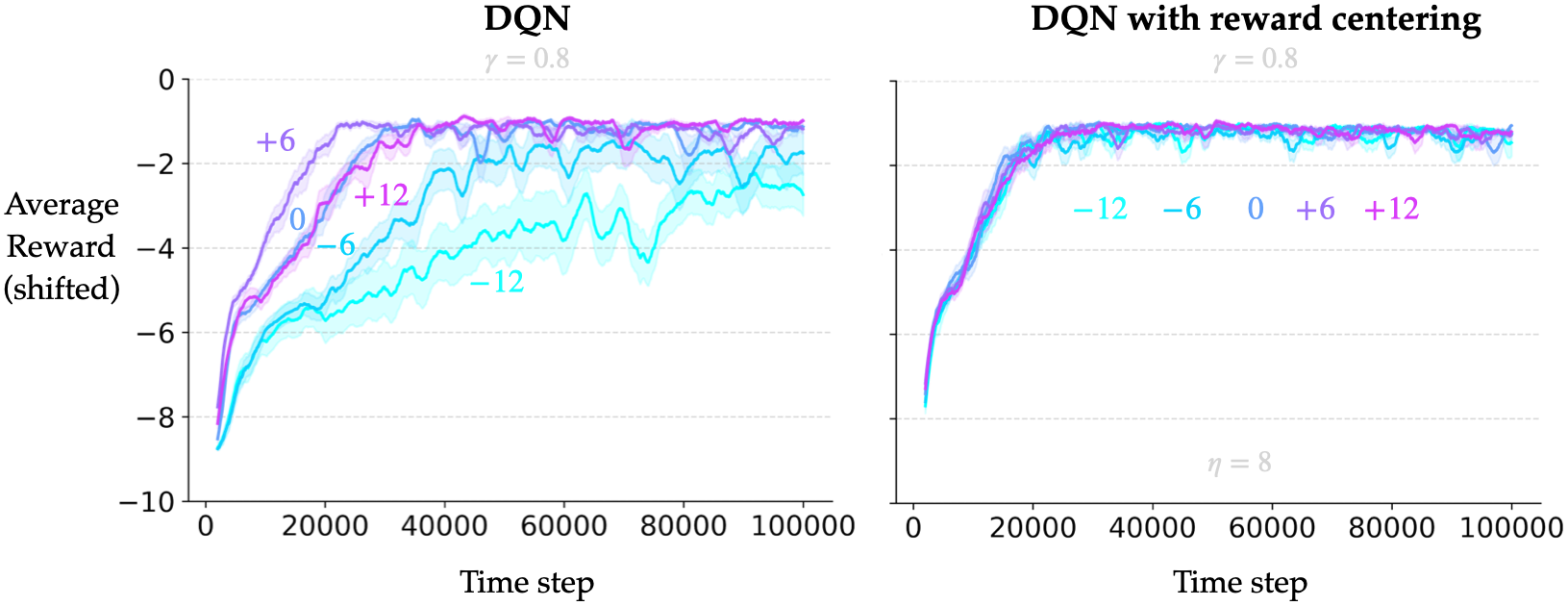}
    \caption{Learning curves for Q-learning with and without centering on variants of the Pendulum problem when $\gamma=0.8$. 
    The performance without centering was different on each variant while that with centering was roughly the same. Reward centering also resulted in much faster learning. 
    These trends were consistent across values of $\gamma$.}
    \label{fig:results_Pen_learningcurves_gamma0.99}
\end{figure}

\begin{figure}
    \centering
    \includegraphics[width=0.95\textwidth]{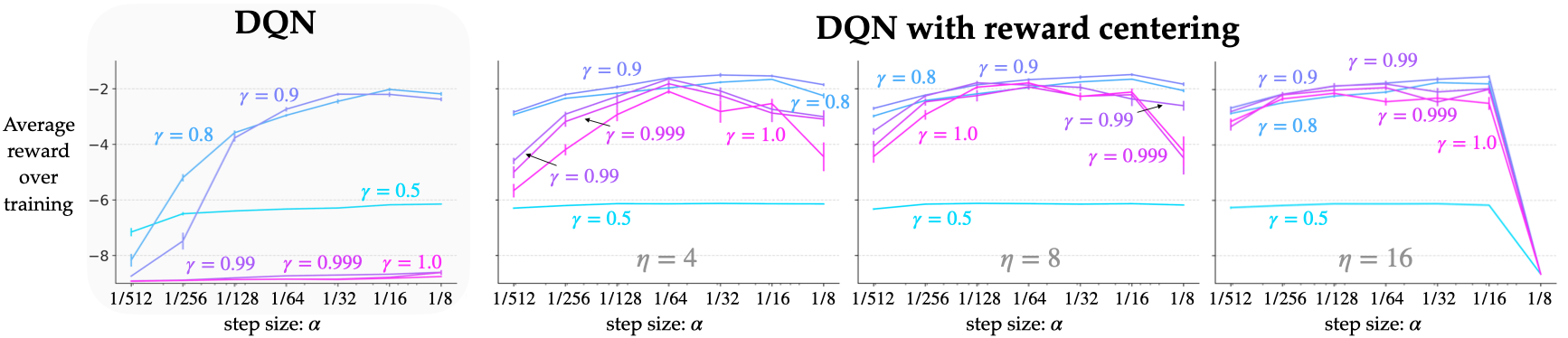}
    \caption{Parameter studies showing the sensitivity of the algorithms’ performance to their parameters on the Pendulum domain. $\gamma=0.5$ was too small to solve this problem. \textit{Far left:} The performance of DQN suffered for discount factors larger than 0.9. \textit{Center to right:} For each discount factor, the performance of DQN with centering was better across a broad range of $\alpha$. 
    Additionally, the performance was not too sensitive to the parameter $\eta$.}
    \label{fig:results_pen_sensitivity_offset0}
\end{figure}

\begin{figure}
    \centering
    \includegraphics[width=0.95\textwidth]{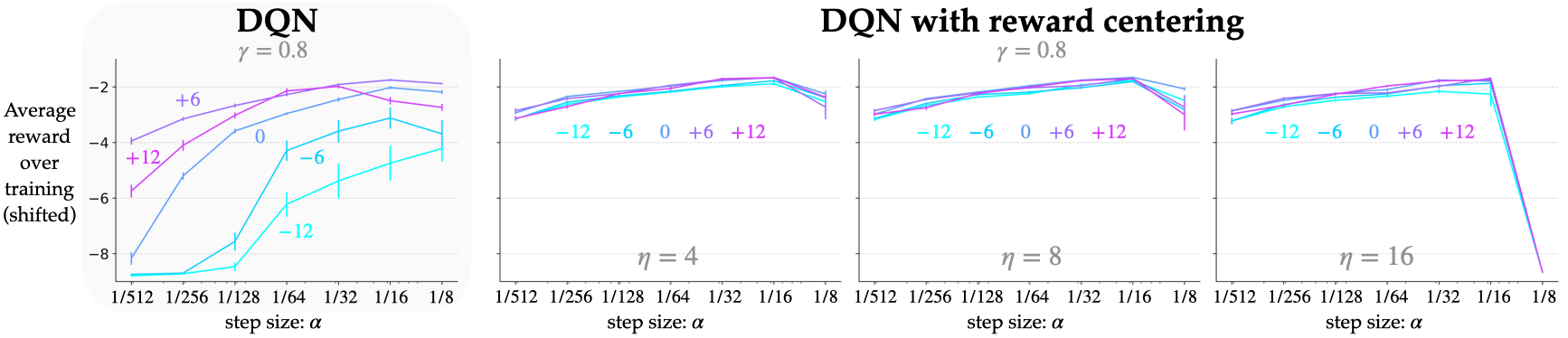}
    \caption{Parameter studies showing the sensitivity of the algorithms’ performance with $\gamma=0.8$ to variants of the Pendulum problem. \textit{Far left:} Without centering, the performance of DQN differed significantly on the different variants. \textit{Center to right:} With centering, the performance of DQN was about the same across the problem variants across a large range of the step size $\alpha$, and was also quite robust to the choice of $\eta$.}
    \label{fig:results_pen_sensitivity_gamma0.8}
\end{figure}

We also report preliminary results of PPO (Schulman et al., 2017) with and without centering. 
We chose to test these on the classic Mujoco problems (Todorov et al., 2012). 
Mujoco domains are typically implemented as episodic problems; we converted them to continuing problems by (a) setting the episode-truncation parameter to a very large number, and (b) if applicable, resetting the domain to a starting state with a large negative reward if the agent enters an unrecoverable state.
We used value-based centering \eqref{eq:update_cdiscq_rbar_TDerror}, 
where $\delta_t$ corresponds to the advantage estimates computed by standard PPO. 

Figure \ref{fig:centering_results_ppo} shows the learning curves for PPO with and without centering. 
The $y$-axis shows the average reward obtained the agent over the last 1000 time steps. 
As with all the other experiments in this paper, the evaluation is online—there are no separate training or testing periods.
A careful study will take more time due to the large number of hyperparameters; in our preliminary experiments with 10 runs each, we found that centering results in a slight improvement on all the problems, with the most pronounced improvements on the Humanoid problem.
The step sizes corresponding to average-reward estimate for the different domains are: Hopper: 1E{-4}, HalfCheetah: 1E{-3}, Walker2D: 2E{-5}, Swimmer: 5E{-5}, Humanoid: 1E{-2}, Ant: 1E{-4}.


\begin{figure}
    \centering
    \includegraphics[width=0.31\linewidth]{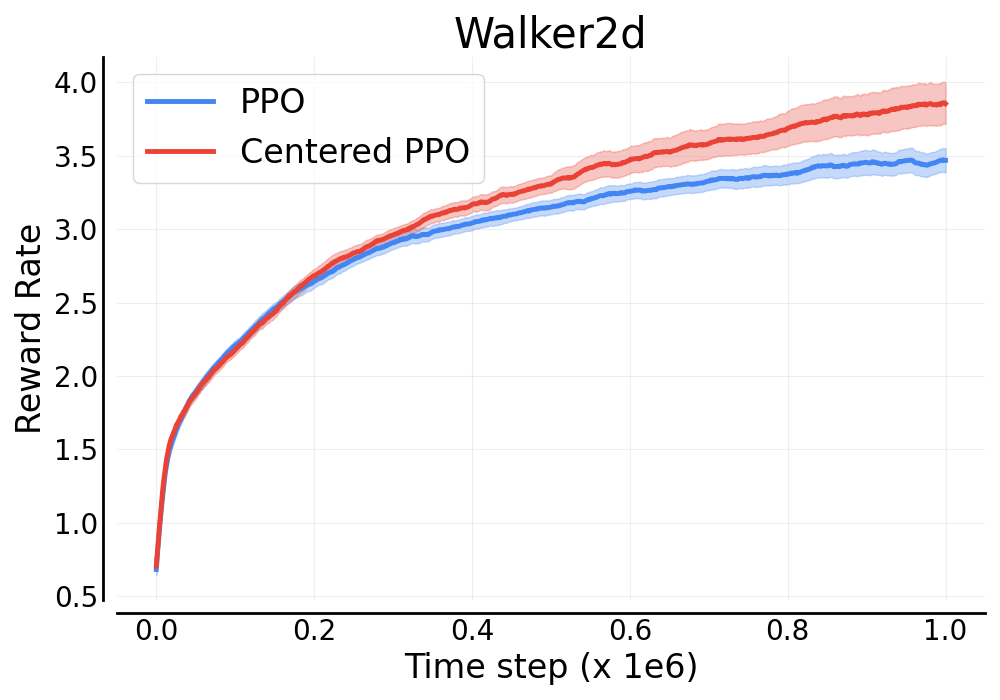}
    \includegraphics[width=0.31\linewidth]{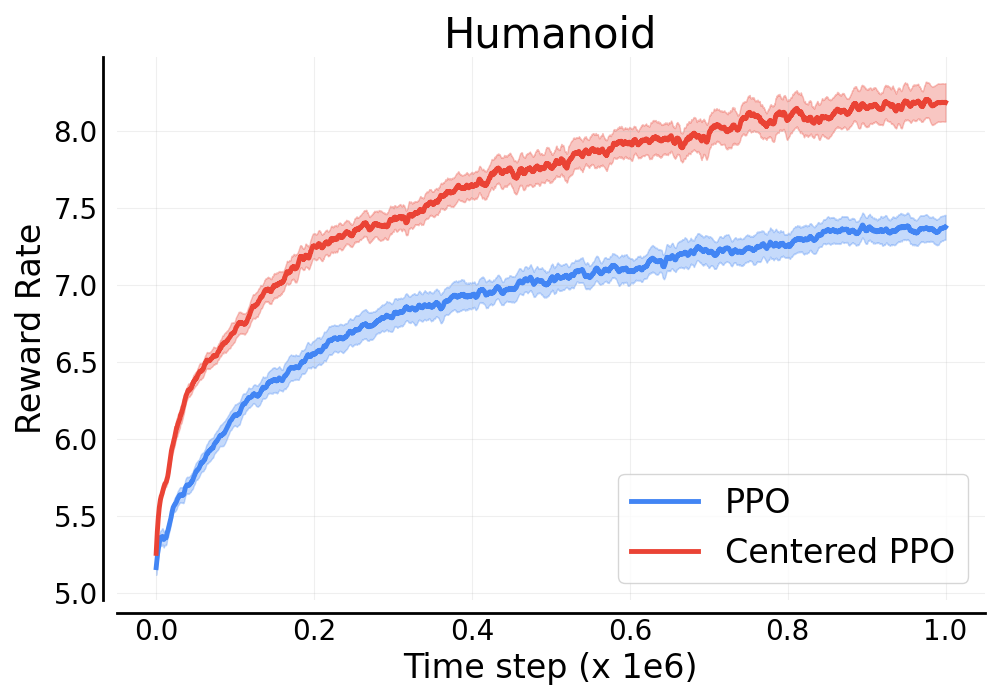}
    \includegraphics[width=0.31\linewidth]{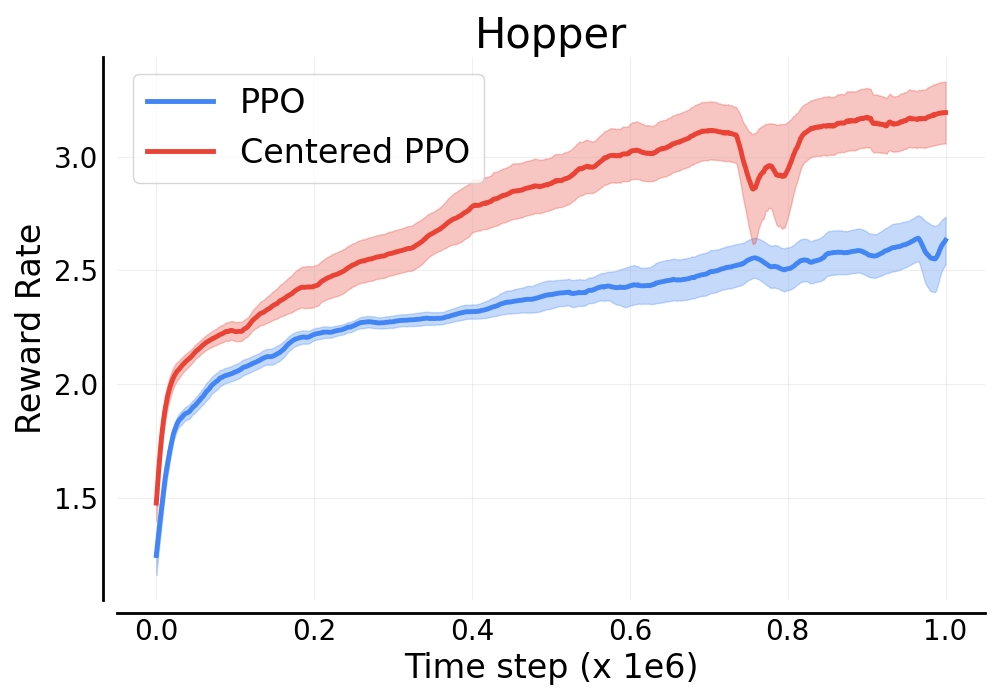} \\
    \includegraphics[width=0.31\linewidth]{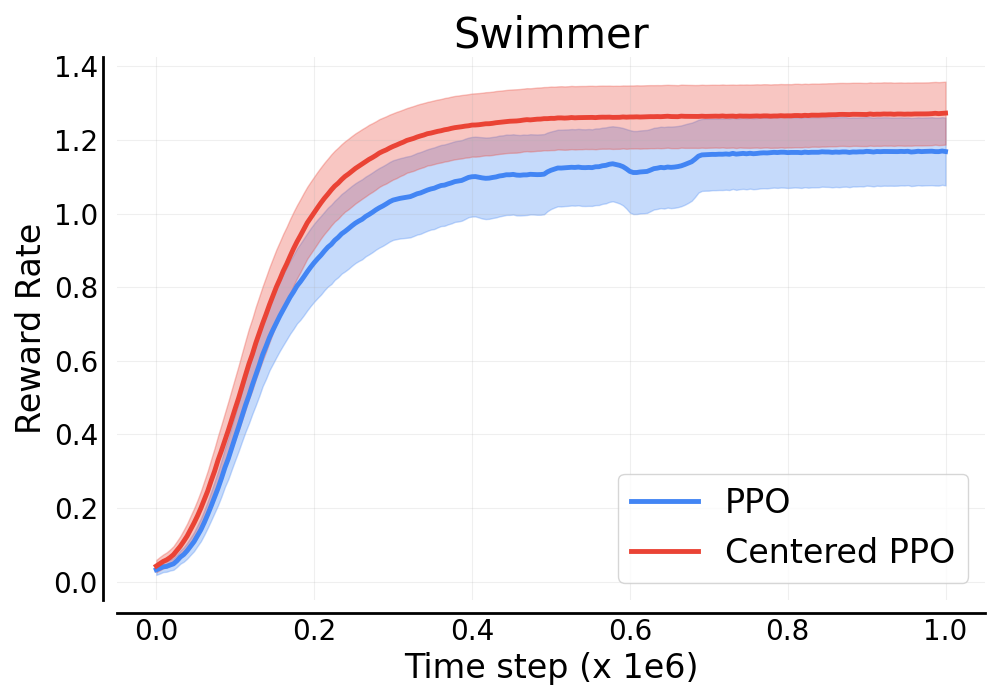}
    \includegraphics[width=0.31\linewidth]{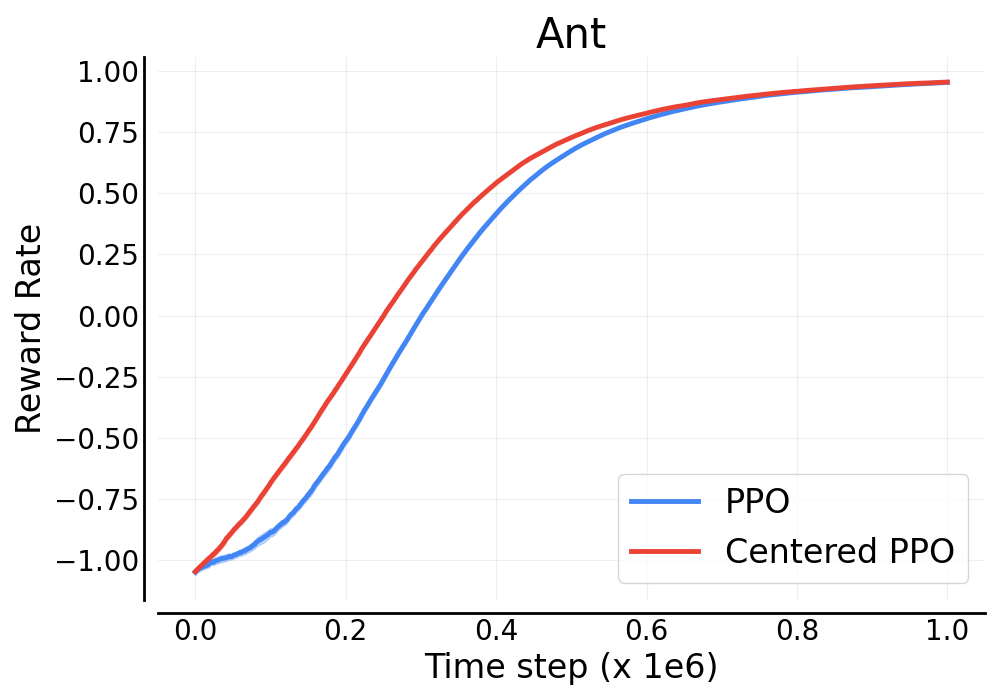}
    \includegraphics[width=0.31\linewidth]{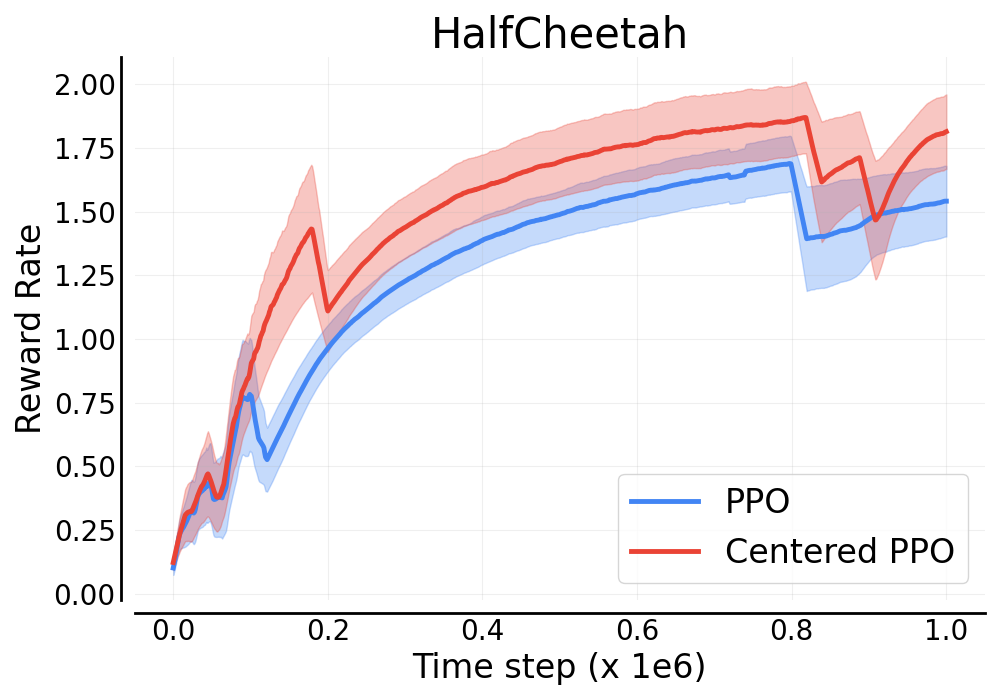}
    \caption{Learning curves for PPO with and without centering on continuing versions of six Mujoco domains. The solid lines and the shaded region denote the mean and one standard error over 10 independent runs.}
    \label{fig:centering_results_ppo}
\end{figure}

\section{Connections to Related Approaches}
\label{app:related_work}

Concurrently with Devraj and Meyn (2021), Schneckenreither (2020) realized the Laurent series decomposition 
suggests that an explicit estimate of the average reward can completely remove the offset.
So they proposed an algorithm which to estimate and subtract the average reward, with two important differences: (a) the average-reward estimate is updated only after non-exploratory actions, and (b) the algorithm has two discount factors to aim for the strongest optimality criterion—Blackwell optimality.
Schneckenreither did not provide any convergence result for their algorithm.
However, they analyzed that \textit{if} the algorithm converged to the desired fixed point, then the resulting policy would be (Blackwell-)optimal.
Wan et al.~(2021) pointed out the average-reward estimate can be updated at every time step, including ones with exploratory actions, and showed almost-sure convergence of their algorithms.
Combining those insights with Devraj and Meyn's, we show the convergence of Q-learning with value-based reward centering.

Reward centering and the advantage function have orthogonal benefits. 
The advantage function benefits the actor by reducing the variance of the updates in the policy space (Sutton \& Barto, 2018; Schulman et al., 2016).
On the other hand, reward centering benefits the critic's or baseline's estimation by eliminating the need to estimate the large state-independent constant offset. 
Both the quantities involved in the advantage function---${a}_\pi^\gamma(s,a) = {q}_\pi^\gamma(s,a) - {v}_\pi^\gamma(s), \forall s,a$---have the large state-independent offset $r(\pi)/(1-\gamma)$.
The net effect of the offset is zero when they are subtracted. 
But the key point is that both the state- and action-value estimates include the large offset. 
Reward centering removes the need to estimate the large offset for both the state- and action-value function, which simplifies the critic-estimation problem.
The actor update is left unchanged with reward centering because the advantage function itself remains unchanged: $\tilde{a}_\pi^\gamma(s,a) = \tilde{q}_\pi^\gamma(s,a) - \tilde{v}_\pi^\gamma(s)$, because $\tilde{q}_\pi^\gamma(s,a) = q_\pi^\gamma(s,a) - r(\pi)/(1-\gamma)$ and $\tilde{v}_\pi^\gamma(s) = v_\pi^\gamma(s) - r(\pi)/(1-\gamma)$. 
Hence, we expect reward centering to benefit all the algorithms that estimate values, which include all actor-critic methods that involve advantage estimation.

Dividing all the rewards with a (potentially changing) scalar number is typically referred to as reward scaling (see, e.g., Engstrom et al., 2020).
Just like reward centering, reward scaling does not change the ordering of policies in a continuing problem. 
Scaling reduces the spread of the rewards, centering brings them close to zero, both of which can be favorable to complex function approximators such as artificial neural networks that are used for value estimation starting from a close-to-zero initialization. 
The popular stable\_baselines3 repository scales (and clips\footnote{Reward clipping in general changes the problem. Blinding the agent from large rewards can impose a performance ceiling or make some games impossible to solve (Schaul et al.’s (2021) Section 4.3 discusses this in the context of Atari problems).}) the rewards by a running estimate of the variance of the discounted returns (\href{https://github.com/DLR-RM/stable-baselines3/blob/e3dea4b2e03da6fb7ea70db89602909081a7967b/stable_baselines3/common/vec_env/vec_normalize.py#L256}{github.com/DLR-RM/stable-baselines3/blob/master/stable\_baselines3/common/vec\_env/vec\_normalize.py\#L256}). 
Mean-centering the rewards as well would be beneficial for continuing domains. 
Note that the mechanism of computing the mean and variance is more complicated in the off-policy setting than the on-policy setting. 
Our TD-error-based technique is likely part of the final solution for the off-policy setting.
Simply maintaining a running estimate of the variance (as in the stable\_baselines’ approach) introduces a bias. 
As mentioned earlier, Schaul et al.’s (2021) technique is a good starting point. 

Reward centering can be seen as reward shaping (Ng et al., 1999) with a constant state-independent potential function: $\Phi(s) = r(\pi)/(1-\gamma), \forall s$. 
Their Theorem 1 then reiterates that reward centering does not change the optimal policy of the problem. 
A possible drawback of reward shaping is that fully specifying the potential-based shaping function can be tricky, especially for problems with large state spaces. 
In the case of reward centering this is relatively easy: the potential function is constant across the entire state space, and we know how to learn the average reward reliably from data.

Finally, we note that the idea of shifting rewards has been explored in episodic problems. Sun et al.'s (2022) experiments show that subtracting a suitable constant from all the rewards can help in some episodic problems. 
However, we do not expect shifting or centering to help in general in episodic problems: 
shifting all the rewards by a constant does not change a continuing problem, but can change episodic problems (e.g., the gridworld example from the final section of the main text).


\end{document}